\documentclass{article}
\usepackage[final]{neurips_2023}

\usepackage{subfigure}

\usepackage{enumitem}

\usepackage{amsmath,amssymb,amsthm}
\usepackage{bm}
\usepackage{graphicx,here,comment}
\usepackage{algorithmic}
\usepackage{algorithm}
\usepackage{xcolor}
\usepackage{wrapfig}

\usepackage{url} 
\usepackage[pagebackref]{hyperref}
\usepackage{xcolor}
\hypersetup{colorlinks,linkcolor={red!70!black},citecolor={blue!65!black},urlcolor={blue!70!black}}
\usepackage{color}

\newtheorem{theorem}{Theorem}

\newtheorem{lemma}[theorem]{Lemma}
\newtheorem{proposition}[theorem]{Proposition}
\newtheorem{assumption}{Assumption}
\theoremstyle{definition}
\newtheorem{definition}{Definition}
\newtheorem{remark}{Remark}

\newcommand{\R}{\mathbb{R}}
\newcommand{\N}{\mathbb{N}}

\newcommand{\mG}{\mathcal{G}}

\newcommand{\mA}{\mathcal{A}}

\newcommand{\mH}{\mathcal{H}}

\newcommand{\Ep}{\mathbb{E}}
\renewcommand{\Pr}{\mathbb{P}}

\renewcommand{\hat}{\widehat}
\renewcommand{\tilde}{\widetilde}

\newcommand{\argmin}{\operatornamewithlimits{argmin}}
\newcommand{\argmax}{\operatornamewithlimits{argmax}}

\newcommand{\mone}{\textbf{1}}

\def\bX{\mathbf{X}}

\def\br{\mathbf{r}}

\newcommand{\trace}{\mathrm{tr}}

\DeclareMathOperator{\Err}{E}

\usepackage{color}

\newcommand{\updated}[1]{#1}

\usepackage{newtxtext,newtxmath}

\newcommand{\algoref}[1]{Algorithm~\ref{#1}}

\newcommand{\thmref}[1]{Theorem~\ref{#1}}
\newcommand{\lemref}[1]{Lemma~\ref{#1}}

\newcommand{\Ich}{I(t)}

\newcommand{\Xit}{X^{(i)}(t)}

\newcommand{\Xarmt}[1]{X^{(#1)}(t)}
\newcommand{\Xarmnt}[2]{X^{(#1)}_{#2}(t)}
\newcommand{\Xt}{X^{(I(t))}(t)}
\newcommand{\Yt}{Y^{(I(t))}(t)}
\newcommand{\Zit}{Z^{(i)}(t)}
\newcommand{\Xin}[1]{X^{(i)}_{#1}}

\newcommand{\Yin}[1]{Y^{(i)}_{#1}}

\newcommand{\bfXi}{\mathbf{X}^{(i)}}
\newcommand{\bfZi}{\mathbf{Z}^{(i)}}
\newcommand{\bA}{\mathbf{A}}

\newcommand{\bfYi}{\mathbf{Y}^{(i)}}

\newcommand{\thetai}{\theta^{(i)}}
\newcommand{\hattheta}{\hat{\theta}}
\newcommand{\hatthetai}{\hattheta^{(i)}}
\newcommand{\hatthetaarm}[1]{\hattheta^{(#1)}}
\newcommand{\thetaarm}[1]{\theta^{(#1)}}
\newcommand{\thetaarmn}[2]{\theta^{(#1)}_{#2}}
\newcommand{\Reg}{R}
\newcommand{\Ni}{N}

\newcommand{\Sigmai}{{\Sigma^{(i)}}}
\newcommand{\Sigmaarm}[1]{{\Sigma^{(#1)}}}
\newcommand{\Lambdai}{\Lambda^{(i)}}
\newcommand{\Vi}{V^{(i)}}
\newcommand{\lambdaij}{\lambda^{(i)}_j}
\newcommand{\lambdaik}{\lambda^{(i)}_k}
\newcommand{\lambdainum}[1]{\lambda^{(i)}_{#1}}

\newcommand{\uik}{u^{(i)}_k}

\newcommand{\biasiNT}{B_{N,T}^{(i)}}
\newcommand{\variNT}{V_{N,T}^{(i)}}

\newcommand{\nn}{\nonumber\\}
\newcommand{\ThetaScale}{\theta_{\mathrm{max}}}

\newcommand{\Clambda}{C_{\lambda}}
\newcommand{\hatSigmai}{{\hat{\Sigma}^{(i)}}}
\newcommand{\Stop}{\mathrm{Stop}}
\newcommand{\Stopi}{\mathrm{Stop}^{(i)}}

\newcommand{\poly}{\mathrm{poly}}
\newcommand{\CUpper}{C_U}
\newcommand{\cUpper}{c_U}
\newcommand{\CLower}{C_L}
\newcommand{\cLower}{c_L}
\newcommand{\CPoly}{{C_{\mathrm{poly}}}}
\newcommand{\CStopTr}{{C_{T}}}
\newcommand{\myeps}{{\epsilon_T}}

\usepackage[normalem]{ulem}
\DeclareRobustCommand{\erase}{\bgroup\markoverwith{\textcolor{red}{\rule[.5ex]{2pt}{0.4pt}}}\ULon}

\title{High-dimensional Contextual Bandit Problem without Sparsity}

\author{
  Junpei Komiyama \\
  New York University\\
  \texttt{junpei@komiyama.info} \\
  \And
  Masaaki Imaizumi \\
  The University of Tokyo / RIKEN Center for AIP\\
  \texttt{imaizumi@g.ecc.u-tokyo.ac.jp} \\
}

\begin{document}

\maketitle

\allowdisplaybreaks

\begin{abstract}
In this research, we investigate the high-dimensional linear contextual bandit problem where the number of features $p$ is greater than the budget $T$, or it may even be infinite. Differing from the majority of previous works in this field, we do not impose sparsity on the regression coefficients. Instead, we rely on recent findings on overparameterized models, which enables us to analyze the performance of the minimum-norm interpolating estimator when data distributions have small effective ranks. We propose an explore-then-commit (EtC) algorithm to address this problem and examine its performance. Through our analysis, we derive the optimal rate of the ETC algorithm in terms of $T$ and show that this rate can be achieved by balancing exploration and exploitation. Moreover, we introduce an adaptive explore-then-commit (AEtC) algorithm that adaptively finds the optimal balance. We assess the performance of the proposed algorithms through a series of simulations. 
\end{abstract}

\section{Introduction}

The multi-armed bandit problem \citep{robbins1952,Lairobbins1985} has been widely studied in the field of sequential decision-making problems in uncertain environments, and it can be applied to a variety of real-world scenarios. This problem involves an agent selecting one of $K$ arms in each round and receiving a corresponding reward. 
The agent aims to maximize the cumulative reward over rounds by using a clever algorithm that balances exploration and exploitation. 
In particular, a version of this problem called the contextual bandit problem \citep{abe1999,li2010} has attracted significant attention in the machine learning community. 
By observing the contexts associated with the arms, the agent can choose the best arm as a function of the contexts. This extension enables us to model many personalized machine learning scenarios, such as recommendation systems \citep{li2010,yuyan2022} and online advertising \citep{tang13}, and personalized treatments \citep{Chakraborty2014}.

Most of the papers about stochastic linear bandits assume that the number of features $p$ is moderate \citep{li2010,chu11a,abbasi2011improved}. When $p = o(\sqrt{T})$ to the number of rounds $T$, the model is identifiable, and the agent can choose the best arm for most rounds. However, recent machine learning models desire to utilize an even larger number of features, and the theory of bandit models under the identifiability assumption does not necessarily reflect the modern use of machine learning. 
Several recent papers have overcome this limitation by considering sparse linear bandit models \citep{wang2018minimax,kim2019doubly,BastaniB20,hao2020high,oh2021sparsity,li2022simple,JangZJ22}. Sparse linear bandit models accept a very large number of features\footnote{Typically, the number of feature $p$ can be exponential to the number of datapoints $T$.} and suppress most of the coefficients by introducing the $\ell 1$ regularize. 

That said, the sparsity imposed by such models limits the applicability of these models.
For example, in the case of recommendation models based on factorization, each user is associated with a dense latent vector \citep{Rendle10,Agarwal2012PersonalizedCS,yuyan2022}, which implies the sparsity is not unlikely the case. 
Another possible drawback of sparse models is that it requires the condition number to be close to one (e.g., the restricted isometry property, see \citet{van2009conditions} for review). 
This implies that the quality of the estimator is compromised by the noise on the features that correspond to small eigenvalues. 
Furthermore, it is still non-trivial to select a proper value of the penalty coefficient as a hyper-parameter. 
For example, \citet{hara2017enumerate} claims that small changes in the choice of coefficients significantly alter feature selection, and \citet{miolane2021distribution} show a limitation of the conventional theory on the choice of penalty coefficients.

In this paper, We consider an alternative high-dimensional linear bandit problem without sparsity. We allow $p$ to be as large as desired, and in fact, we even allow $p$ to be infinitely large.
Such an overparameterized model has more parameters than the number of training data points. 
A natural estimator in such a case is an \textit{interpolating} estimator, which perfectly fits the training data. 
We adopt recent results that bound the error of the estimator in terms of the \textit{effective rank} \citep{koltchinskii2017concentration,bartlett2020benign} on the covariance of the features. 
When the eigenvalues of the covariance decay moderately fast, we can obtain a concentration inequality on the squared error of the estimator.

The contributions of this paper are as follows: First, We consider explore-then-commit (EtC) strategy for the stochastic bandit problem based on the minimum-norm interpolating estimator. We derive the optimal rate of exploration that minimizes regret. However, EtC requires model-dependent parameters on the covariance, which limits the practical utility. To address this limitation, we propose an adaptive explore-then-commit (AEtC) strategy, which adaptively estimates these parameters and achieves the optimal rate. We conduct simulations to verify the efficiency of the proposed method.

\section{Preliminary}

\subsection{Notation}
For $z \in \N$, $[z] := \{1,2,\dots,z\}$.
For $x \in \R$, the notation $\lfloor x \rfloor$ here denotes the largest integer that is less than or equal to a scalar $x$. 
For vectors $X,X' \in \R^p$, $\langle X,X'\rangle := X^\top X'$ is an inner product, $\|X\|_2^2 := \langle X,X \rangle$ is an $\ell 2$-norm.
For a positive-definite matrix $A \in \R^p \times \R^p$, $\|X\|_\bA^2 := \langle X, \bA X \rangle$ is a weighted $\ell 2$-norm.
$\|\bA\|_{\mathrm{op}}$ denotes an operator norm of $\bA$. 
$O(\cdot), o(\cdot), \Omega(\cdot), \omega(\cdot)$ and $\Theta(\cdot)$ denotes Landau's Big-O, little-o, Big-Omega, little-omega, and Big-Theta notations, respectively.
$\tilde{O}(\cdot),\tilde{\Omega}(\cdot)$, and $\tilde{\Theta}(\cdot)$ are the notations that ignore polylogarithmic factors.

\subsection{Problem Setup}

This paper considers a linear contextual bandit problem with $K$ arms.
We consider the fully stochastic setting, where the contexts, as well as the rewards, are drawn from fixed distributions.
For each round $t \in [T]$ and arm $i \in [K]$, we define $\Xit$ as a $p$-dimensional zero-mean sub-Gaussian vector.
We assume $\Xit$ is independent among rounds (i.e., vectors in two different rounds $t,t'$ are independent) but allow vectors $\Xarmt{1},\Xarmt{2},\dots,\Xarmt{K}$ to be correlated with each other.
The forecaster chooses an arm $\Ich \in [K]$ based on the $\Xit$ values of all the arms, and then observes a reward that follows a linear model as shown in\begin{align}
\Yt = \langle \Xt, \thetaarm{\Ich} \rangle + \xi(t). \label{eq:linear_model}
\end{align}
The unknown true parameters $\thetai$ for each arm $i \in [K]$ lie in a parameter space $\Theta \subset \R^p$, and the independent noise term $\xi(t)$ has zero mean and variance $\sigma^2 > 0$. We assume that $\xi(t)$ is sub-Gaussian, and for the sake of simplicity, we assume that it does not depend on the choice of the arm. However, our results can be extended to the case where $\xi(t)$ varies among arms.
We assume that each $\thetai$ are bounded $\|\thetai\|_2 \leq \ThetaScale$.
For each $i \in [K]$, we define a covariance matrix $\Sigmai = \Ep[\Xarmt{i}\Xarmt{i}^\top] \in \R^{p \times p}$.

We define $i^*(t) := \argmax_{i \in [K]} \langle \Xit, \thetai \rangle $ as the (ex ante) optimal arm at round $t$.
Our goal is to design an algorithm that maximizes the total reward, which is equivalent to minimizing the following expected regret \citep{Lairobbins1985,AuerCF02};
\begin{align}
    \Reg(T) := \Ep \left[ \sum_{t=1}^T  \left(\langle \Xarmt{i^*(t)}, \thetaarm{i^*(t)} \rangle - \langle \Xt, \thetaarm{\Ich} \rangle\right) \right], \label{def:regret}
\end{align}
where the expectation is taken with respect to the randomness of the contexts and (possibly) on the choice of arm $\Ich$.

\subsection{\updated{Theory of Overparametrized Models}}

\updated{
The primary focus of this paper is on scenarios where the number of arms $K$ is moderate, but the number of features $p$ is greater than the budget $T$, possibly to the point of being infinite.
The efficiency of linear regression models in such scenarios depends significantly on the covariance matrix, $\Sigma^{(i)}$.
Unlike sparsity, the theory of benign overfitting tightly examines errors using the decay of the eigenvalues of the covariance matrix of the context.
In particular, if the decay rate of the eigenvalues is at a certain level, the error in linear regression converges to zero, even in high-dimensional spaces.
To provide a more rigorous analysis of eigenvalue decay, the concept of \textit{effective rank} is introduced in Section \ref{subsec_ranks}.
}

\begin{remark}{\rm (Dependence on $T$)}
In accordance with \citet{bartlett2020benign}, we permit the covariance matrix $\Sigmai$ to depend on $T$. 
In other words, we consider the sequence of covariances $\Sigmai(1),\Sigmai(2),\dots$ for each $T=1,2,\dots$. 
\updated{
The linear regression is consistent if the effective rank of $\Sigmai(T)$ grows sufficiently slow as a function of $T$.
}
\end{remark}

\subsection{Effective Ranks of Covariance Matrix}\label{subsec_ranks}

For a covariance matrix $\Sigmai$ for $i \in [K]$ and $k \in [p]$, let $\lambdaik$ be its $k$-th largest eigenvalue, such that $\Sigmai = \sum_{k=1}^p \lambdaik \uik (\uik)^\top$ with order $\lambdainum{1} \geq \lambdainum{2} \geq \cdots \geq \lambdainum{p}$ and eigenvectors $\uik$.
We define the concept of \textit{effective rank} as
\begin{align}
    r_k(\Sigmai) := \frac{\sum_{j > k} \lambdaij}{\lambdainum{k+1}}, ~~\mbox{and}~~R_k(\Sigmai) := \frac{(\Sigma_{j > k} \lambdaij)^2}{\Sigma_{j > k} (\lambdaij)^2}. \label{def:effective_rank}
\end{align}
The first quantity $r_k(\Sigmai)$ is related to the trace of $\Sigmai$, and the second quantity $R_k(\Sigmai)$ is related to the decay rate of the eigenvalues.

\begin{figure*}[htbp]
\includegraphics[width=\hsize]{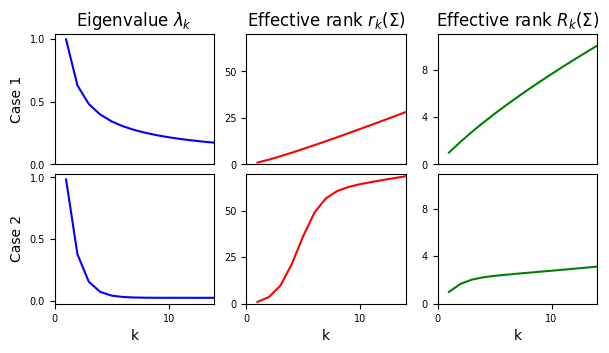}
    \caption{Eigenvalues $\lambda_k$ (left), the effective rank $r_k(\Sigma)$ (middle), and the effective rank $R_k(\Sigma)$ (right) with $k = 1,...,15$ and $T=3$; Case 1: $\lambda_k = C k^{-1 + 1/T^{0.99}}$, and Case 2: $\lambda_k =C( \exp(-k) + T \exp(-T) / p)$.
    \updated{Case 1 considers a decay that is slightly faster than $k^{-1}$, whereas Case 2 considers an exponentially fast decay. The slower increase of $r_k(\Sigma)$ and $R_k(\Sigma)$ in Case 2 reflects the impact of the eigenvalues' faster decay.}
    } 
    \label{fig:ranks}
\end{figure*}
\updated{
The effective rank is used as a measure of the intrinsic complexity of high-dimensional data, by rigorously capturing a decay rate of eigenvalues.
If the covariance matrix $\Sigmai$ is an identity matrix of size $p$, then $r_0(\Sigmai)$ and $R_0(\Sigmai)$ are both equal to $p$ (the rank of $\Sigmai$). However, we anticipate that these quantities will be less than $p$, which enables learning with fewer samples.
In Figure \ref{fig:ranks}, we plot of the effective rank in $k$ with certain cases of eigenvalues $\{\lambda_k\}_{k}$ of $\Sigma$. 
Even though $\Sigma$ is full-rank, the effective rank $R_k(\Sigma)$ is sublinear in $k$, which reflects the intrinsic low-complexity of data.
}

\updated{
Because of this property, the effective ranks are used in modern high-dimensional statistics. 
\cite{koltchinskii2017concentration} evaluated concentration inequalities for high-dimensional random matrices using the notion. 
\cite{bartlett2020benign,tsigler2020benign} evaluated the prediction error of a high-dimensional linear regression using the effective ranks and showed the consistency of the prediction.
}

\section{Explore-then-Commit with Interpolation}

The \textit{Explore-then-Commit} (EtC) algorithm is a well-known approach for solving high-dimensional linear bandit problems, and it has been shown to be effective in previous studies such as \citet{hao2020high,li2022simple}. The EtC algorithm operates by first conducting $T_0 = NK < T$ rounds of exploration, during which it uniformly explores all available arms to construct an estimator $\hatthetai$ for each arm $i \in [K]$. After the exploration phase, the algorithm proceeds with exploitation.
Let $N$ be the number of the draws of arm $i$, and $\bfXi = (\Xin{1},...,\Xin{N})^\top \in \R^{N \times p}$ and $\bfYi = (\Yin{1},...,\Yin{N})^\top \in \R^{N}$ be the observed contexts and rewards of arm $i$, where $(\Xin{n},\Yin{n})$ is the corresponding values on the $n$-th draw of arm $i$.
Since we choose $\Ich$ uniformly during the exploration phase, these are independent and identically drawn samples from the corresponding distribution. 

For estimating the parameter $\thetai$, we consider the minimum-norm \textit{interpolating estimator} that perfectly fits the data, which reveals its advantage in recent results on high-dimensions \citep{bartlett2020benign}. 
Rigorously, we assume $p > N$ and and consider the following definition:
\begin{align*}%
    \hatthetai := \argmin\left\{\|\theta\|_2 \,\bigg|\, \theta \in \R^p, 0 = \sum_{(\Yin{n},\Xin{n}): n \le N} (\Yin{n} - \langle \theta, \Xin{n} \rangle)^2 \right\}.
\end{align*}
This estimator has a simple representation
$
    \hatthetai= (\bfXi)^\top (\bfXi (\bfXi)^\top )^{-1} \bfYi.
$
The EtC algorithm is presented in Algorithm \ref{alg:esc}, which utilizes the aforementioned interpolating estimator. 
In the subsequent sections, we will first discuss the assumptions on the data-generating process, followed by an analysis of the accuracy of the estimator. We then present an upper bounds on the regret of the EtC algorithm.

\begin{algorithm}
\caption{Explore-then-Commit (EtC)}\label{alg:esc} 
\begin{algorithmic}
\REQUIRE Exploration duration $T_0$.
\FOR{$t=1,.., T_0$}
    \STATE Observe $\Xit$ for all $i \in [K]$.
    \STATE Choose $\Ich = t - K\lfloor t/K \rfloor$.
    \STATE Receive a reward $\Yt$.
\ENDFOR
\FOR{$i \in [K]$}
    \STATE $\hatthetai \leftarrow (\bfXi)^\top (\bfXi (\bfXi)^\top )^{-1} \bfYi$.
\ENDFOR
\FOR{$t = T_0 + 1,...,T$}
    \STATE Observe $\Xit$ for all $i \in [K]$.
    \STATE Choose $\Ich = \argmax_{i \in [K]}  \langle \Xit, \hatthetai \rangle$.
    \STATE Receive a reward $\Yt$.
\ENDFOR
\end{algorithmic}
\end{algorithm}

\section{Theory of Explore-then-Commit}

\updated{
This section analyzes the EtC algorithm.
If we were to fix the number of samples, this theory would largely align with the existing literature \citep{bartlett2020benign,tsigler2020benign}. However, the unique challenge in our case arises from the fact that the EtC selects the number of samples used to construct an estimator to balance between exploration and exploitation.
}

\subsection{Assumption and Notion}

We consider a spectral decomposition $\Sigmai = \Vi \Lambdai (\Vi)^\top$ with the matrices (operators) $\Vi$ and $\Lambdai$ for each arm $i\in[K]$, then impose the following assumptions. 
\begin{assumption}[sub-Gaussianity]\label{asmp:subgaussian}
For all $t \in \N$ and $i \in [K]$, the followings hold:
\begin{itemize}
\item There exists a random vector $\Zit$ such that $\Xit = \Vi (\Lambdai)^{1/2} \Zit$ which is independent elements and sub-Gaussian with a parameter $\kappa_x^2 > 0$, that is, for all $\lambda \in \R^p$, we have
            $\Ep[\exp( \langle \lambda , \Zit \rangle)] \leq \exp(\kappa_x^2 \|\lambda\|_2^2 / 2)$.
\item     Moreover, $\xi(t)$ is conditionally sub-Gaussian with a parameter $\kappa_\xi^2 > 0$, that is, for all $\lambda \in \R$, we have
            $\Ep[\exp(\lambda \xi(t)) \mid \Xit] \leq \exp(\kappa_\xi^2 \lambda^2/2)$.
\end{itemize}

\end{assumption}

\updated{The first bullet point of Assumption \ref{asmp:subgaussian} requires that the features are multivariate sub-Gaussian and their tail is not too heavy, which is important for concentration inequalities on random variables \citep{vershynin2018high}.
The second bullet point of Assumption \ref{asmp:subgaussian},  which states that rewards involve sub-Gaussian noise, is a common assumption in contextual bandit problems.
} 

\subsection{Benign Covariance}\label{subsec_benigncov}

We impose a condition to be benign on the covariance matrix $\Sigmai$ for all $i \in [K]$.
The condition is described using the notion of effective/coherent rank, which is commonly applied to the study of benign overfitting \citep{bartlett2020benign,tsigler2020benign}. However, unlike those papers that estimate using all $T$ samples, we only use a portion of the data for learning.

We first define the \textit{coherent rank} of $\Sigmai$ with a number of samples $N$ as
\begin{align*}
    k^*_N =  k^*_N(\Sigmai,N) := \min \{k \in \N \cup \{0\} \mid r_k(\Sigmai) \geq N\},
\end{align*}
where we define $\min \{\emptyset\} = +\infty$.
\updated{By using the coherent rank, we decompose the error into two quantities} called \textit{effective bias/variance} denoted as ${\biasiNT}$ and ${\variNT}$, based on a budget of $T$ and the number of samples $N$ used for estimation.
\begin{align}
    \biasiNT := \lambdainum{k^*_N}\,, \mbox{~and~} \variNT := \left( \frac{ k^*_N }{N} +  \frac{N}{R_{k^*_N}(\Sigmai)} \right). \label{def:effective_bias_var}
\end{align}

\begin{definition}[Benign covariance] \label{def:benign_covariance}
\updated{Under Assumption \ref{asmp:subgaussian},} a covariance matrix $\Sigmai$ is \textit{benign}, if $\biasiNT = o(1)$ and $\variNT = o(1)$ hold 
    as $N,T \to \infty$ while $N = \Theta (T)$. 
\end{definition}%
To put it differently, \updated{Assumption \ref{asmp:subgaussian} and Definition \ref{def:benign_covariance} state that,} if we can achieve consistent estimation by using all the data for estimation, then the covariance matrix is considered benign.

Technically, the benign property implies that eigenvalues decay fast enough compared with $T$. In particular, the following examples have been considered in the literature.
\begin{proposition}[Example of Benign Covariance, Theorem 31 in \citet{bartlett2020benign}\footnote{It should be noted that \cite{bartlett2020benign} only provided the variance term. Later on, \cite{tsigler2020benign} described both the variance and bias terms, which we follow.}
]
\label{prop_examples}
    A covariance matrix $\Sigmai$ with eigenvalues $\lambdainum{1},\lambdainum{2},...$ is benign if it satisfies one of the following.
    \begin{itemize}[leftmargin=*]
        \item \textbf{Example 1:} Let $a \in (0,1)$, $p = \infty$ and $\lambdainum{k} = k^{-(1+1/T^a)}$. In this case, we have $\biasiNT = O((T^{a}/N)^{1+1/T^a})$ and $\variNT = O(T^a/N + T^{-a})$. For this model to be learnable, $N \gg T^a$ is required, and the variance dominates the bias. 
        \item\textbf{Example 2:} Let $\lambdainum{k} = k^{-b}$ and $p = p_T = O(T^{c})$ with $b \in (0,1)$ and $c \in \left(\max\left(1, \frac{2}{2-b}, \frac{1}{1 - b^2}\right), \frac{1}{1-b}\right)$.
        In this case, we have $\biasiNT = O(T^{c(1-b)}/N)$ and $\variNT = O((N/T^{c})^{\max(1, \frac{1-b}{b})})$. For this model to be learnable, $N \gg T^{c(1-b)}$ is required, and the bias dominates the variance.
     \end{itemize}
\end{proposition}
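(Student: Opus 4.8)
The plan is to treat Proposition~\ref{prop_examples} as two self-contained spectral computations: for each prescribed power-law spectrum I would evaluate the coherent rank $k^*$ and the effective ranks $r_{k^*}(\Sigmai)$ and $R_{k^*}(\Sigmai)$, and then read off $\biasiNT=\lambdainum{k^*}$ and $\variNT = k^*/N + N/R_{k^*}(\Sigmai)$ directly from their definitions. The only analytic tool needed is a comparison of the tail sums $\sum_{j>k}\lambdaij$ and $\sum_{j>k}(\lambdaij)^2$ with integrals $\int_k^p x^{-\beta}\,\mathrm{d}x$; since $x\mapsto x^{-\beta}$ is monotone, each sum is sandwiched between $\int_{k+1}^{p+1}$ and $\int_{k}^{p}$, so the multiplicative error is $1+O(1/k)$. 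Because $k^*\to\infty$ in both examples, this error is absorbed into the $\Theta$-constants and the whole argument becomes bookkeeping of exponents of $T$.

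For \textbf{Example 2} ($\lambdainum{k}=k^{-b}$, $b\in(0,1)$, $p=p_T\asymp T^{c}$) the tail obeys $\sum_{j>k}\lambdaij \asymp p^{1-b}/(1-b)$ as long as $k\ll p$, whence $r_k(\Sigmai)\asymp k^{b}p^{1-b}$. Solving $r_{k^*}(\Sigmai)=N$ gives $k^*\asymp (N/p^{1-b})^{1/b}=(N/T^{c(1-b)})^{1/b}$, which exceeds $1$ exactly under the learnability threshold $N\gg T^{c(1-b)}$ and stays $\ll p$ exactly when $c>1$. Substituting yields $\biasiNT=(k^*)^{-b}\asymp T^{c(1-b)}/N$ and $k^*/N\asymp (N/T^{c})^{(1-b)/b}$. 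To evaluate $R_{k^*}(\Sigmai)$ I would split on the sign of $1-2b$: if $2b<1$ the sum $\sum_{j>k^*}(\lambdaij)^2\asymp p^{1-2b}$ is controlled by the truncation at $p$, giving $R_{k^*}(\Sigmai)\asymp p$, whereas if $2b>1$ it is controlled by its head term $\asymp (k^*)^{1-2b}$; computing $N/R_{k^*}(\Sigmai)$ in each regime and combining with $k^*/N$ produces the stated bound on $\variNT$. The constraints $c>2/(2-b)$ and $c>1/(1-b^{2})$ are then seen to be precisely the inequalities that force $\biasiNT$ to dominate the two variance terms $N/R_{k^*}(\Sigmai)$ and $k^*/N$ respectively, so verifying ``the bias dominates'' reduces to a comparison of $T$-exponents, with one or the other constraint binding according to whether $b\lessgtr 1/2$.

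For \textbf{Example 1} ($\lambdainum{k}=k^{-(1+\epsilon)}$ with $\epsilon:=T^{-a}$, $p=\infty$) the same scheme applies, but the barely-summable tail $\sum_{j>k} j^{-(1+\epsilon)}\asymp k^{-\epsilon}/\epsilon$ now carries the diverging prefactor $1/\epsilon=T^{a}$. This gives $r_k(\Sigmai)\asymp k/\epsilon$, hence $k^*\asymp\epsilon N = N/T^{a}$ (well defined and tending to infinity exactly when $N\gg T^{a}$), and then $\biasiNT=(k^*)^{-(1+\epsilon)}\asymp (T^{a}/N)^{1+1/T^{a}}$, together with $R_{k^*}(\Sigmai)\asymp k^*/\epsilon^{2}$ and the two variance terms $k^*/N\asymp T^{-a}$ and $N/R_{k^*}(\Sigmai)\asymp T^{-a}$.

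I expect the \emph{main obstacle} to be the $T$-dependent exponent of Example 1. The factor $k^{-\epsilon}=\exp(-\epsilon\ln k)$ is neither constant nor polynomial, so the integral comparison must be carried out with the prefactor $1/\epsilon$ tracked to leading order (via $\zeta(1+\epsilon)\sim 1/\epsilon$ as $\epsilon\to0$), and one must check that at $k=k^*\asymp N/T^{a}$ the quantity $(k^*)^{-\epsilon}$ stays bounded away from $0$ and $\infty$ so that it can be hidden in the $\Theta$-constants; otherwise the exponent $1+1/T^{a}$ claimed for the bias would be corrupted. A secondary point of care is that, since the conclusions are one-sided $O(\cdot)$ statements, I would retain the non-dominant summands of $\variNT$ rather than discard them and confirm the ``dominates'' assertions by an explicit comparison of $T$-exponents over the stated ranges of $a$ and $c$.
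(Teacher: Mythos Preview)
The paper does not actually prove Proposition~\ref{prop_examples}: it is stated as a direct citation of Theorem~31 in \citet{bartlett2020benign} (with the bias contribution attributed to \citet{tsigler2020benign}), and no derivation appears in either the main text or the appendix. There is therefore nothing in the paper to compare your proposal against.

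That said, your plan---evaluating $k^*$, $r_{k^*}$, $R_{k^*}$ by integral comparison of the power-law tails and then reading off $\biasiNT$ and $\variNT$ from their definitions---is the standard computation underlying the cited result and is correct in substance. Your treatment of the $T$-dependent exponent in Example~1 via $(k^*)^{-\epsilon}=\exp(-T^{-a}\ln(N/T^{a}))\to 1$ for $N\le T$ is precisely the check needed to keep the asymptotics honest, and your case split on $2b\lessgtr 1$ in Example~2 is the right way to resolve $R_{k^*}$. One observation your calculation will surface: in Example~2 the variance comes out as $(N/T^{c})^{\min(1,(1-b)/b)}$ rather than the $\max$ printed in the proposition, and it is with $\min$ that the constraints $c>2/(2-b)$ and $c>1/(1-b^{2})$ become exactly the conditions for the bias to dominate in the regimes $b<1/2$ and $b>1/2$ respectively; the $\max$ appears to be a transcription slip in the statement. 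Similarly, in Example~1 your computation yields $\variNT\asymp T^{-a}$ alone; the additional $T^{a}/N$ in the paper's stated bound is harmless as an $O(\cdot)$ upper bound and in fact matches the order of $\biasiNT$, which explains why the combined error function used later is $\Err(N,T)^2 = T^{a}/N + T^{-a}$.
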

The first example is when the decay rate is near $k^{-1}$, but the trace is $O(T^a)$. The second example is when the decay rate is smaller than $k^{-1}$, and the trace is $O(T^{c(1-b)})$. 
Note that \cite{bartlett2020benign} provided two other examples of the benign covariance matrices.\footnote{One of the omitted examples is the case where eigenvalues decay slightly slower than Example 1. The other example is the case where eigenvalues decay exponentially but has a noise term.}
Our analysis mainly focuses on the two examples in Proposition \ref{prop_examples}, but another example is also empirically tested in Section \ref{sec_sim}.

\subsection{Estimation Error by Exploration}

We evaluate an error in the estimator $\hatthetai$ by its prediction quality.
That is, with a covariance matrix $\Sigmai$ and an identical copy $\Xin{*}$ of $\Xin{1}$, we study
\begin{align*} 
    \|\thetai - \hatthetai\|_{\Sigmai}^2 = \Ep_{\Xin{*}} \left[(\langle {\thetai}, \Xin{*}\rangle - \langle \hatthetai, \Xin{*}\rangle)^2\right],
\end{align*}

The following result bounds the error of the estimator $\hatthetai$ in terms of bias and variance, which is a slight extension of \citet{tsigler2020benign}.
For $k \in [p]$, we define an empirical submatrix as $\mathbf{X}^{(i)}_{\updated{k:\infty}} \in \R^{N\times (p-k) }$ as the $p-k$ columns to the right of $\mathbf{X}^{(i)}$, and define a Gram sub-matrix $A_k^{(i)} = \mathbf{X}_{\updated{k:\infty}}^{(i)}(\mathbf{X}_{\updated{k:\infty}}^{(i)})^\top \in \R^{N \times N}$.
\begin{theorem}\label{thm_ub}
    Suppose Assumption \ref{asmp:subgaussian}.
    If there exists $\cUpper > 1$ such that $k^*_N < N/\cUpper $ and a condition number of $A_k^{(i)}$ is positive with probability at least $1 - \cUpper  e^{-N/\cUpper}$, then we have 
    \begin{equation} 
         \|\hatthetai - \thetai\|_\Sigmai^2 \leq \CUpper \left( \biasiNT + \variNT \right) ,
        \label{ineq_upper}
    \end{equation}
    with probability at least $1 - 2 \cUpper e^{-N/\cUpper}$.
\end{theorem}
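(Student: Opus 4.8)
The plan is to follow the benign-overfitting analysis of \citet{tsigler2020benign}, adapting it so that the number of samples $N$ used for estimation is tracked separately from the budget $T$ on which the covariance $\Sigmai$ is allowed to depend. I would write the observed rewards as $\bfYi = \bfXi \thetai + \boldsymbol{\xi}$, where $\boldsymbol{\xi} \in \R^N$ collects the sub-Gaussian noise over the $N$ draws of arm $i$, and substitute the closed form $\hatthetai = (\bfXi)^\top (A^{(i)})^{-1}\bfYi$ with $A^{(i)} = \bfXi(\bfXi)^\top$. Since $p > N$ and $A^{(i)}$ is invertible on the event assumed in the hypothesis, the operator $\Pi := (\bfXi)^\top (A^{(i)})^{-1}\bfXi$ is the orthogonal projection onto the row space of $\bfXi$, and the error splits as
\begin{align*}
\hatthetai - \thetai = -(I - \Pi)\thetai + (\bfXi)^\top (A^{(i)})^{-1}\boldsymbol{\xi}.
\end{align*}
Measuring in the $\Sigmai$-norm and applying $\|a+b\|^2 \le 2\|a\|^2 + 2\|b\|^2$ separates a \emph{bias} term $\|(I-\Pi)\thetai\|_\Sigmai^2$, which is deterministic given the contexts, from a \emph{variance} term $\|(\bfXi)^\top (A^{(i)})^{-1}\boldsymbol{\xi}\|_\Sigmai^2$ driven by the noise. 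I would bound these separately and match them to $\biasiNT$ and $\variNT$, respectively.

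The engine behind both bounds is the spectral split at the coherent rank $k^*$: decompose the coordinates into the top $k^*$ eigendirections of $\Sigmai$ and the tail $\{k^*+1,\dots,p\}$, writing $A^{(i)}$ as the sum of a rank-$k^*$ top block and the tail Gram matrix $A_{k^*}^{(i)}$. The hypotheses are exactly what make this split work: $k^* < N/\cUpper$ keeps the top block low-dimensional, while $r_{k^*}(\Sigmai) \ge N$ (immediate from the definition of $k^*$) forces the tail to have effective rank exceeding $N$, so that $A_{k^*}^{(i)}$ concentrates around a multiple of the identity — on the stated high-probability event all its eigenvalues lie within a constant factor of $\sum_{j>k^*}\lambdaij$, and it is well-conditioned. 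I would import this concentration (a sub-Gaussian / Hanson--Wright matrix argument, as in Bartlett et al.) together with the companion fact that $A^{(i)}$ is dominated by $A_{k^*}^{(i)}$, which is precisely the role of the condition-number hypothesis on $A_k^{(i)}$.

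For the variance term I would first condition on $\bfXi$ and take the expectation over $\boldsymbol{\xi}$, giving $\sigma^2\,\trace\big((A^{(i)})^{-1}\bfXi\Sigmai(\bfXi)^\top (A^{(i)})^{-1}\big)$. Splitting this trace along the top and tail directions and inserting the eigenvalue bounds on $A_{k^*}^{(i)}$ produces the two contributions $k^*/N$ (one per top direction) and $N/R_{k^*}(\Sigmai)$ (from the tail), i.e.\ a constant multiple of $\variNT$; a Hanson--Wright inequality, valid by the sub-Gaussianity of $\boldsymbol{\xi}$ in Assumption \ref{asmp:subgaussian}, then upgrades this conditional expectation to a bound holding with probability $1 - \cUpper e^{-N/\cUpper}$. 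For the bias term, $\|(I-\Pi)\thetai\|_\Sigmai^2$ is controlled by the same spectral geometry: it splits into a tail part bounded by $\lambdainum{k^*+1}\|\thetai\|_2^2$ and a top part which, using the well-conditioned top block and $r_{k^*}(\Sigmai)\ge N$, is likewise of order $\lambdainum{k^*}$. After invoking $\|\thetai\|_2 \le \ThetaScale$ both are absorbed into a constant multiple of $\biasiNT = \lambdainum{k^*}$. A union bound over the two events then yields \eqref{ineq_upper} with probability at least $1 - 2\cUpper e^{-N/\cUpper}$.

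The main obstacle is the tail Gram-matrix concentration: showing that $A_{k^*}^{(i)}$ is well-conditioned with eigenvalues of order $\sum_{j>k^*}\lambdaij$ requires the effective-rank condition $r_{k^*}(\Sigmai) \ge N$ coupled with a quantitative sub-Gaussian matrix deviation bound, and it is this step — together with the Sherman--Morrison/Woodbury-type manipulation that isolates each top direction through the inverse of $A^{(i)}$ — that carries essentially all of the technical weight. The remaining novelty over \citet{tsigler2020benign} is bookkeeping: checking that every constant and exponent survives when $N$ and the $T$-dependent covariance $\Sigmai$ are kept distinct, which is routine once the split above is in place.
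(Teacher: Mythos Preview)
Your proposal is correct and follows essentially the same route as the paper: the bias--variance decomposition via $\hatthetai - \thetai = -(I-\Pi)\thetai + (\bfXi)^\top(A^{(i)})^{-1}\boldsymbol{\xi}$, the spectral split at $k^*$, and the reduction of the two pieces to $\biasiNT$ and $\variNT$ are exactly what the paper does, the only difference being that the paper outsources the intermediate bounds on $T_B$ and $T_V$ to Corollary 6 of \citet{tsigler2020benign} as a black box while you sketch their proof. One small slip: the simplification of the top-block bias contribution uses the \emph{minimality} of $k^*$, i.e.\ $r_{k^*-1}(\Sigmai) < N$, rather than $r_{k^*}(\Sigmai)\ge N$ as you wrote, but this is easily fixed.
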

Theorem \ref{thm_ub} implies that the estimation error converges to zero if $\Sigmai$ has the benign property. 
\updated{The assumption on the condition number of $A_k^{(i)}$ has a sufficient condition, which is provided in Lemma 3 in \cite{tsigler2020benign}.}

Moreover, the following lemma implies the tightness of the analysis in Theorem \ref{thm_ub}.
\begin{lemma}[Lower bound of estimation error, Theorem 10 in \citet{tsigler2020benign}]\label{lem_lower}
Suppose Assumption \ref{asmp:subgaussian}.
These exist some constants $\cLower, \CLower > 0$ such that, with probability at least $1 - \cLower e^{-N/\cLower}$, we have
\begin{equation}
    \|\hatthetai - \thetai\|_\Sigmai^2 
    \ge \CLower \left(\biasiNT + \variNT\right).
\end{equation}
\if0
we have 
\begin{align*}
     \Ep_{\xi}\left[\|\hatthetai - \thetai\|_\Sigmai^2 \right] \geq c'' \biasiNT.
\end{align*}
Also, 
\begin{align*}
     \Ep_{\thetai} \left[\|\hatthetai - \thetai\|_\Sigmai^2 \right] \geq c'' \variNT.
\end{align*}
\fi
\end{lemma}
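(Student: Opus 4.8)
Since this is the matching lower bound to Theorem~\ref{thm_ub}, the plan is to reproduce the conditional bias--variance decomposition and to bound each piece \emph{from below}, mirroring the argument of Theorem~10 in \citet{tsigler2020benign}. Writing $\bfYi = \bfXi\thetai + \boldsymbol{\xi}$ with $\boldsymbol{\xi}\in\R^N$ the stacked exploration noise and $P := (\bfXi)^\top(\bfXi(\bfXi)^\top)^{-1}\bfXi$ the orthogonal projection onto the row space of $\bfXi$, the interpolating estimator satisfies
\[
    \hatthetai - \thetai = -(\bI-P)\thetai + (\bfXi)^\top\big(\bfXi(\bfXi)^\top\big)^{-1}\boldsymbol{\xi}.
\]
Because $\boldsymbol{\xi}$ is conditionally mean-zero given $\bfXi$, taking the expectation over the noise kills the cross term, leaving
\[
    \Ep\big[\|\hatthetai-\thetai\|_{\Sigmai}^2 \,\big|\, \bfXi\big]
    = \|(\bI-P)\thetai\|_{\Sigmai}^2
    + \sigma^2\,\trace\!\big(\Sigmai (\bfXi)^\top(\bfXi(\bfXi)^\top)^{-2}\bfXi\big).
\]
I would lower bound the bias term by $\gtrsim \biasiNT = \lambdainum{k^*}$ and the variance term by $\gtrsim \variNT$, then promote these in-expectation bounds to the high-probability statement. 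Throughout I would work on the event that the tail Gram matrix $A_{k^*}^{(i)}$ is well conditioned, which is exactly the event appearing in Theorem~\ref{thm_ub} and holds with probability $1-\cLower e^{-N/\cLower}$ under Assumption~\ref{asmp:subgaussian} together with the coherent-rank condition $r_{k^*}(\Sigmai)\ge N$ (the definition of $k^*$).

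\textbf{Variance lower bound.} I would split the spectrum of $\Sigmai$ at index $k^*$ into a head (the top $k^*$ eigendirections) and a tail, and bound the two contributions to $\trace(\Sigmai (\bfXi)^\top(\bfXi(\bfXi)^\top)^{-2}\bfXi)$ separately. The crucial input is the concentration of $A_{k^*}^{(i)} = \mathbf{X}^{(i)}_{k^*+1:p}(\mathbf{X}^{(i)}_{k^*+1:p})^\top$: under sub-Gaussianity and $r_{k^*}(\Sigmai)\ge N$, all of its eigenvalues are of order $\Theta(\sum_{j>k^*}\lambdaij)$ with probability $1-\cLower e^{-N/\cLower}$. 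Feeding this into the head directions yields a contribution $\gtrsim k^*/N$, and into the tail directions a contribution $\gtrsim N\sum_{j>k^*}(\lambdaij)^2/(\sum_{j>k^*}\lambdaij)^2 = N/R_{k^*}(\Sigmai)$; together these give $\gtrsim \variNT$. This part holds for every $\thetai$ since it is driven purely by the noise.

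\textbf{Bias lower bound and the main obstacle.} The bias term $\|(\bI-P)\thetai\|_{\Sigmai}^2$ measures the $\Sigmai$-weighted energy of $\thetai$ left outside the $N$-dimensional row space of $\bfXi$. Since $N$ is smaller than the effective dimension of the tail, $P$ cannot capture the directions near index $k^*+1$, so the residual retains $\Omega(\lambdainum{k^*})$ of weighted energy \emph{provided $\thetai$ carries an order-one component there}; this genericity of $\thetai$ (compatible with $\|\thetai\|_2\le\ThetaScale$) is precisely what makes the lower bound match the $\lambdainum{k^*}$ upper bound of Theorem~\ref{thm_ub}, and I would carry it over from the hypotheses of Theorem~10 in \citet{tsigler2020benign}. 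The main obstacle is upgrading the in-expectation bounds to a high-probability lower bound: this needs (i) concentration of the noise quadratic form around its conditional mean, which follows from the sub-Gaussianity of $\boldsymbol{\xi}$ and the well-conditioning of $A_{k^*}^{(i)}$, and (ii) an anti-concentration step guaranteeing the error does not dip below $\CLower(\biasiNT+\variNT)$ except with probability $\cLower e^{-N/\cLower}$. Since bias and variance are both functions of the same $\bfXi$, the delicate point is controlling them jointly on a single high-probability event; the sub-Gaussian matrix concentration for $A_{k^*}^{(i)}$ is what makes this possible, and I expect it to be the hardest ingredient to transfer faithfully from \citet{tsigler2020benign} to the present overparameterized, $T$-dependent-covariance setting.
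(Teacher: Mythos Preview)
The paper does not provide its own proof of this lemma: it is stated verbatim as a citation of Theorem~10 in \citet{tsigler2020benign}, and no argument for it appears anywhere in the main text or the appendix. So there is no ``paper's proof'' to compare your proposal against; the authors simply import the result.

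Your sketch is a faithful outline of the strategy in \citet{tsigler2020benign}: the bias--variance split of $\hatthetai-\thetai$, the head/tail decomposition at $k^*$ for the variance trace, and the use of the concentration of $A_{k^*}^{(i)}$ to pin its eigenvalues at order $\sum_{j>k^*}\lambdaij$ are exactly the ingredients used there. One point worth flagging: as you yourself note, the bias lower bound $\|(\bI-P)\thetai\|_{\Sigmai}^2\gtrsim\lambdainum{k^*}$ cannot hold for an arbitrary fixed $\thetai$ (take $\thetai=0$), so some genericity or prior/worst-case assumption on $\thetai$ is needed. The lemma as stated in the present paper suppresses this hypothesis, which is present in the original Theorem~10 of \citet{tsigler2020benign}; you are right to plan on importing it rather than trying to prove the bias lower bound for every $\thetai$.
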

In other words, the upper bound in Theorem \ref{thm_ub} is optimal up to a constant.\footnote{Note that the constant here can depend on model parameters.}

\subsection{Regret Bound of Explore-then-Commit}

This section analyzes the EtC algorithm. 
We introduce an error function $\Err(N, T)$ that characterizes the rate of regret, which can be obtained by considering $(\biasiNT + {\variNT})^{1/2}$.
\begin{assumption}{\rm (Error function\footnote{The coherent rank $k^*_N$ as well as $\biasiNT, \variNT$ are discrete in $N,T$, and the error function $\Err(N, T)$ is introduced to circumvent the issues related to this discreteness.})}
\label{asmp_error}
There exists a continuous function $\Err: \mathbb{R}^2 \to \mathbb{R}^+$ that is decreasing in $N$ such that $\|\hatthetai - \thetai\|_\Sigmai = \tilde{\Theta}\left( \Err(N, T) \right)$ as $N,T \to \infty$.
\end{assumption}
Assumption \ref{asmp_error} is satisfied in Examples 1 and 2. In particular, for Example 1 in Proposition \ref{prop_examples}, the error function is given as $\Err(N, T) = \sqrt{T^{a}/N + T^{-a}}$, while for Example 2 in Proposition \ref{prop_examples}, it is given as $\Err(N, T) = \sqrt{T^{c(1-b)}/N}$.

\begin{theorem} \label{thm:upper_bound}
Suppose Assumptions \ref{asmp:subgaussian}--\ref{asmp_error}.
    Suppose that we run the EtC algorithm (\algoref{alg:esc}).
    Then, regret \eqref{def:regret} satisfies 
    \begin{align*}
    R(T)  = \tilde{O}(\updated{L(T, K)}),
\end{align*}
as $T \to \infty$ with some $\alpha > 0$.
\updated{where $L(T, K)$ is such that}
\begin{equation}\label{amount_exploration}
\updated{
N^* = \min_{N} \{N: N \ge T \Err(N/K, T)\}.
}
\end{equation}
\end{theorem}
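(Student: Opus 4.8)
The plan is to decompose the regret \eqref{def:regret} into the contribution from the exploration phase and the contribution from the exploitation phase, optimize each over the exploration length, and then balance them. First I would bound the exploration regret: during the first $T_0 = NK$ rounds we pull arms uniformly, and each such pull incurs at most a constant per-round regret (since $\|\thetai\|_2 \le \ThetaScale$ and the contexts are sub-Gaussian, the instantaneous gap $\langle \Xarmt{i^*(t)}, \thetaarm{i^*(t)}\rangle - \langle \Xt, \thetaarm{\Ich}\rangle$ has a bounded expectation, up to polylogarithmic factors). This gives an exploration regret of order $\tilde{O}(NK)$.

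Next I would bound the exploitation regret. For $t > T_0$ the algorithm commits to $\Ich = \argmax_i \langle \Xit, \hatthetai\rangle$, so the per-round regret is the gap between the reward of the true-best arm and the reward of the arm selected using the estimated parameters. The key is that whenever the estimator is accurate, i.e. $\|\hatthetai - \thetai\|_\Sigmai$ is small for every $i$, a wrong choice can only happen when the true gap between the top arms is itself small, and in that case the incurred regret is controlled. Concretely, I would write the per-round exploitation regret as a sum over arms of $|\langle \Xit, \thetai - \hatthetai\rangle|$-type error terms; taking expectation over the fresh context $\Xit$ and using the definition $\|\thetai - \hatthetai\|_\Sigmai^2 = \Ep_{\Xin{*}}[(\langle \thetai - \hatthetai, \Xin{*}\rangle)^2]$ together with Cauchy--Schwarz converts the context-expectation of this error into $\|\thetai - \hatthetai\|_\Sigmai$. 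By \thmref{thm_ub} and Assumption \ref{asmp_error}, this is $\tilde{\Theta}(\Err(N,T))$ per arm with high probability, so summing over the $T - T_0$ exploitation rounds and over arms yields an exploitation regret of order $\tilde{O}(T\, \Err(N,T)\, K^{?})$, with the high-probability failure event (probability $\le 2\cUpper e^{-N/\cUpper}$ from \thmref{thm_ub}) contributing only a negligible additive term once $N = \Omega(\log T)$.

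With the two pieces $\tilde{O}(NK)$ and $\tilde{O}(T\,\Err(N,T))$ in hand, I would optimize over $N$. Since $\Err(N,T)$ is decreasing in $N$ (Assumption \ref{asmp_error}), the exploration term grows in $N$ while the exploitation term shrinks, so the optimal $N$ equates the two orders, i.e. $NK \asymp T\,\Err(N,T)$, which is exactly the definition of $N^*$ in \eqref{amount_exploration}. Plugging the explicit forms of $\Err(N,T)$ from the two examples (e.g. $\sqrt{T^a/N + T^{-a}}$ or $\sqrt{T^{c(1-b)}/N}$) and solving the fixed-point equation gives $N^* = \Theta(T^{\beta})$ for the appropriate exponent, and substituting back yields $R(T) = \tilde{O}(T^\alpha)$ for the corresponding $\alpha > 0$. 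The factor $K^{2/3}$ in the statement should emerge from this balancing once the $K$-dependence of both terms is tracked carefully through the fixed point.

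The main obstacle I anticipate is the careful handling of the $K$-dependence and the passage from the per-round squared error $\|\thetai - \hatthetai\|_\Sigmai^2$ to the actual exploitation regret. The subtlety is that the algorithm selects the arm maximizing the \emph{estimated} reward, so bounding the regret requires controlling not merely each individual prediction error but the error in the argmax, which couples the errors across arms; I would need either a union bound over arms (giving a clean but possibly loose $K$ factor) or a more refined gap-based argument to recover the stated $K^{2/3}$ rather than a linear-in-$K$ dependence. Reconciling the exponent on $K$ in the two regret pieces through the fixed-point equation \eqref{amount_exploration} is where the bookkeeping will be most delicate.
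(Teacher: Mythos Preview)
Your decomposition and balancing strategy are exactly what the paper does: split $R(T)$ into an exploration contribution $R_1 = \tilde{O}(T_0)$ and an exploitation contribution $R_2 = \tilde{O}(T\,\Err(T_0/K,T))$, then equate the two via \eqref{amount_exploration}. The one piece you flagged as uncertain --- how to control the argmax error without paying a linear $K$ factor --- is handled in the paper not by Cauchy--Schwarz plus a union bound but by observing that, conditional on the exploration data, each $D_i(t) := \langle \Xit, \thetai - \hatthetai\rangle$ is a sub-Gaussian random variable with parameter $\|\thetai - \hatthetai\|_\Sigmai$ (this uses Assumption~\ref{asmp:subgaussian}). The standard bound $\Ep[\max_{i\in[K]} D_i(t)] \le \sqrt{2\log K}\max_i \|\thetai - \hatthetai\|_\Sigmai$ (\lemref{lem_subgaussimax}) then controls the per-round exploitation regret, because the definitions of $i^*(t)$ and $\Ich$ as argmaxes give $\langle \Xarmt{i^*(t)}, \thetaarm{i^*(t)}\rangle - \langle \Xarmt{\Ich}, \thetaarm{\Ich}\rangle \le 2\max_i D_i(t)$. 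This yields only a $\sqrt{\log K}$ factor in $R_2$, which is absorbed into $\tilde{O}$. The polynomial $K$-dependence in the final bound therefore does not come from the per-round exploitation analysis at all; it emerges entirely from solving the fixed-point $NK \asymp T\,\Err(N,T)$ with $T_0 = NK$.
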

\updated{
Specifically, for Example 1 in Proposition \ref{prop_examples}, we have $L(T,K) = \max(T^{(2+a)/3}K^{2/3}, T^{1-a/2})$, whereas for Example 2 in Proposition \ref{prop_examples}, we have $L(T, K) = T^{(2 + c(1-b))/3} K^{2/3}$.
}
\begin{proof}[Proof sketch of Theorem \ref{thm:upper_bound}]
We show that the regret-per-round is $O(1)$ during the exploration phase. Moreover, regret-per-round is $\tilde{O}(\Err(T_0/K, T))$ during the exploitation phase. 
The total regret is $T_0 + \tilde{O}(\Err(T_0/K, T)) T$, and optimizing $T_0$ by using the decreasing property of $\Err(N, T)$ in $N$ yields the desired result.
\end{proof}

\subsection{Matching Lower Bound}
We show that this choice of $T_0$ as a function of $T$ is indeed optimal.

\begin{theorem}\label{thm_lower}
Suppose Assumptions \ref{asmp:subgaussian}--\ref{asmp_error}.
Assume that we run the EtC algorithm (Algorithm \ref{alg:esc}). 
    For any choice of $T_0$, the following event occurs with strictly positive probability as $T \to \infty$:
\begin{equation*}
R(T) = \tilde{\Omega}\left( 
\updated{
L(T,3)
}
\right).
\end{equation*}
\end{theorem}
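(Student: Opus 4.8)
The plan is to establish a matching lower bound by showing that, regardless of how the exploration length $T_0$ is chosen, the EtC algorithm must incur regret $\tilde{\Omega}(T^\alpha)$ on a positive-probability event. The key observation is that the exponent $\alpha$ in Theorem~\ref{thm:upper_bound} arose from balancing two competing sources of regret: the exploration cost, which scales like $T_0 = NK$, and the exploitation cost, which scales like $T \cdot \Err(N,T)$. Since $\Err(N,T)$ is decreasing in $N$ (Assumption~\ref{asmp_error}), increasing $T_0$ reduces the per-round exploitation regret but increases the exploration regret, and the optimum $N^*$ in \eqref{amount_exploration} is precisely where these two terms equalize. The strategy is therefore to lower-bound each of the two phases separately and argue that no choice of $T_0$ can make both small simultaneously.

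First I would lower-bound the exploration-phase regret. During the $T_0 = NK$ rounds of uniform exploration, the algorithm ignores the context-dependent optimal arm and picks arms in round-robin fashion, so on a constant fraction of rounds the chosen arm is strictly suboptimal. Because the contexts are zero-mean sub-Gaussian with nondegenerate covariance, the instantaneous regret $\langle \Xarmt{i^*(t)}, \thetaarm{i^*(t)} \rangle - \langle \Xt, \thetaarm{\Ich} \rangle$ is $\Omega(1)$ in expectation on such rounds, giving a total exploration regret of $\Omega(T_0) = \Omega(NK)$. Second I would lower-bound the exploitation-phase regret. Here I invoke Lemma~\ref{lem_lower}: with probability at least $1 - \cLower e^{-N/\cLower}$ the estimation error satisfies $\|\hatthetai - \thetai\|_\Sigmai = \tilde{\Omega}(\Err(N,T))$. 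A nonzero estimation error translates into a constant probability (over the fresh context draw) that the committed arm $\argmax_i \langle \Xit, \hatthetai \rangle$ differs from $i^*(t)$, and on each such round the instantaneous regret is $\tilde{\Omega}(\Err(N,T))$ by a standard anti-concentration argument on the sub-Gaussian contexts. Summing over the $T - T_0 = \Theta(T)$ exploitation rounds yields exploitation regret $\tilde{\Omega}(T\,\Err(N,T))$.

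Combining the two bounds, for any $T_0 = NK$ the total regret is $\tilde{\Omega}(NK + T\,\Err(N,T))$. To conclude I would show this sum is $\tilde{\Omega}(T^\alpha)$ uniformly in $N$. If $N \ge N^*$, the exploration term $NK \ge N^* K \asymp T\,\Err(N^*,T)$ already achieves the optimal rate $\tilde{\Omega}(T^\alpha)$; if $N < N^*$, then by the defining property of $N^*$ in \eqref{amount_exploration} we have $NK < T\,\Err(N,T)$, so the exploitation term dominates and, since $\Err$ is decreasing, $T\,\Err(N,T) \ge T\,\Err(N^*,T) \asymp T^\alpha$. In either regime the regret is $\tilde{\Omega}(T^\alpha)$, matching the upper bound. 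The final step is to assemble the positive-probability event: the event from Lemma~\ref{lem_lower} holds with probability $1 - \cLower e^{-N/\cLower}$, and intersecting it with the constant-probability anti-concentration events on the exploitation contexts still leaves a strictly positive probability, as required.

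The main obstacle I anticipate is the anti-concentration step converting a lower bound on the \emph{estimation error} $\|\hatthetai - \thetai\|_\Sigmai$ into a lower bound on the \emph{decision regret}. An $\tilde{\Omega}(\Err(N,T))$ error in the $\Sigmai$-norm does not by itself force frequent misidentification of the best arm, since the gap between the top two arms' scores could be large; one must argue that with the multiple correlated arms and nondegenerate context covariance, the estimation error projects onto the decision boundary with constant probability, so that a misranking of comparable magnitude occurs often enough. Handling the correlation structure among $\Xarmt{1},\dots,\Xarmt{K}$ and ensuring the anti-concentration constant does not degrade with $T$ is the delicate part; the exploration lower bound, by contrast, is essentially immediate from the nondegeneracy of the context distribution.
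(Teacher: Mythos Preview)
Your high-level decomposition into exploration regret $\Omega(T_0)$ and exploitation regret $\tilde{\Omega}(T\,\Err(N,T))$, followed by the case split on $N \lessgtr N^*$, matches the paper's scheme. However, there is a genuine gap in the exploitation step, and it is exactly the obstacle you flag at the end: you attempt to convert the estimation-error lower bound of Lemma~\ref{lem_lower} into a decision-regret lower bound \emph{for an arbitrary instance satisfying the assumptions}. This cannot work. Take any instance in which the arms are well separated, say with pairwise gaps $\Theta(1)$ in the $\Sigmai$-norm. Once $\Err(N,T) = o(1)$ the committed arm equals $i^*(t)$ with high probability, and the exploitation regret is $o(T)$ regardless of the residual $\|\hatthetai - \thetai\|_{\Sigmai}$. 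So the ``anti-concentration'' step you describe is simply false at this level of generality, not merely delicate.

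The paper resolves this by \emph{constructing a specific hard instance calibrated to the given $T_0$}. With $K=3$ and identical covariances, they set $\thetaarm{1}=(1,0,0,\dots)^\top$, $\thetaarm{2}=(T',0,0,\dots)^\top$, $\thetaarm{3}=(0,0,\dots)^\top$, where $T' = \Err(T_0/K,T)$. The $\Theta(1)$ gap between arm~1 and arms~2,3 drives the $\Omega(T_0)$ exploration regret. The gap between arms~2 and~3 is \emph{deliberately chosen to be $\Theta(T')$}, i.e.\ the same scale as the estimation error guaranteed by Lemma~\ref{lem_lower}. On this instance the anti-concentration step becomes straightforward: with constant probability over the fresh contexts and the $\Theta(T')$ noise in $\hatthetaarm{2},\hatthetaarm{3}$, the algorithm ranks arm~3 above arm~2 and incurs $\Theta(T')$ regret per round. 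The missing idea in your proposal is that the lower bound is a statement about the \emph{worst instance for each $T_0$}, and that instance must be built so that the arm-gap matches the estimation error; without this construction the argument does not close.
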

\begin{proof}[Proof sketch of \thmref{thm_lower}]
We explicitly construct a model with $K=3$. Let $\myeps = \Err(T_0/K, T)$. 
In the model, the $\Sigmaarm{1}, \Sigmaarm{2}, \Sigmaarm{3}$ are identical, and the only difference is that the first coefficients $\thetaarmn{1}{1} = 1, \thetaarmn{2}{1} = \Theta(\myeps), \thetaarmn{3}{1} = 0$. All other coefficients are set to zero.
Roughly speaking, the gap is  
\[
\left|
\langle \Xarmt{i}, \hatthetaarm{i} \rangle - \langle \Xarmt{j}, \hatthetaarm{j} \rangle
\right|
=
\left\{
\begin{array}{ll}
\Theta(1) & (i=1, j=2,3)\\
\Theta(\myeps) & (i=2, j=3)
\end{array}
\right.
.
\]
The regret-per-round during the exploration phase is $\Theta(1)$ due to a misidentification of the best arm between arm $1$ and arms $\{2,3\}$. The regret-per-round during the exploitation phase is $\Theta(\myeps)$ due to a misidentification of the best arm between arm $2$ and arm $3$.
\end{proof}
\section{Adaptive Explore-then-Commit (AEtC) Algorithm}

In the prior section, we demonstrated that the optimal way to minimize EtC's regret is by selecting $T_0$, with $T_0$ balancing the exploration and the exploitation. However, this requires knowledge of the covariance matrix's spectrum, which can be difficult to obtain in advance in certain scenarios.
This section explores the way to adaptively determines the extent of exploration required.

\subsection{Estimator}

We assume that $\Sigmai$ follows the data-generating process of Example 1 or 2 in Proposition \ref{prop_examples}.
We use $\beta_T$ to denote that
\begin{equation}
\lambdainum{j} = \Clambda j^{-\beta_T},
\end{equation}
with some constant $\Clambda > 0$.
We have $\beta_T = 1 + 1/T^a$ for Example 1, and $\beta_T = b (<1)$  for Example 2. 

Balancing exploration and exploitation in an overparameterized model is challenging for the following reasons. First, the number of features $p$ is very large or even infinite\footnote{Namely, $p = \infty$ for for Proposition \ref{prop_examples} (1) or $p= T^c$ in for Proposition \ref{prop_examples} (2).}. Second, the trace is heavy-tail because the decay is not very fast.
As a result, a naive use of a traditional method does not work. We address this issue by utilizing two estimators. The first estimator is on the trace $\trace(\hat{\Sigma}^{(i)})$ that we extracted from overparameterization theory \citep{bartlett2020benign}, whereas the second estimator is on the estimated decay rate $\hat{\beta}_T$ that derives from \cite{bosq2000linear,koltchinskii2017concentration}.

For an arm $i \in [K]$ with $N = T_0 / K$ draws, we define an estimated eigenvalues $\hat{\lambda}_1^{(i)},...,\hat{\lambda}_{N}^{(i)} > 0$ from an empirical covariance matrix $\hat{\Sigma}^{(i)} := N^{-1} \sum_{j=1}^{N} \Xin{j}(t) \Xin{j}(t)^\top$. We define the empirical trace to be $\trace(\hatSigmai) = \sum_j \hat{\lambda}_j^{(i)}$.
The following lemma states the consistency of the estimated trace under very mild conditions.
\begin{lemma}{\rm (Error bound of empirical trace)}\label{lem_traceest}
Suppose Assumption \ref{asmp:subgaussian}. Suppose the data generating process (DGP) of Example 1 or Example 2 in Proposition \ref{prop_examples}.
Then, for any $\CPoly>0$, with probability at least $1 - T^{-\CPoly}$, we have the following as $T \to \infty$:
\begin{equation}
\frac{|\trace(\Sigmai) - \trace(\hatSigmai)|}{\trace(\Sigmai)}
= \tilde{O}\left(
\frac{\sqrt{\sum_j (\lambda_j^{(i)})^2} }{\sum_j \lambda_j^{(i)}}
\right)
= o(1). \label{ineq_traceerror}
\end{equation}
\end{lemma}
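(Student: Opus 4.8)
The plan is to recognize the empirical trace as an empirical average of $N$ i.i.d.\ quadratic forms and apply a sub-exponential concentration inequality, after which the relative error reduces exactly to the reciprocal square-root of the effective rank $R_0(\Sigmai) = (\sum_k \lambdaik)^2/\sum_k(\lambdaik)^2$.

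First I would rewrite the trace. Since $\trace(\hatSigmai) = N^{-1}\sum_{j=1}^N \|\Xin{j}\|_2^2$ and, by the spectral decomposition $\Sigmai = \Vi \Lambdai (\Vi)^\top$ together with the representation $\Xin{j} = \Vi(\Lambdai)^{1/2} Z^{(i)}_j$ from \asmpref{asmp:subgaussian} (with $\Vi$ an isometry), we have $\|\Xin{j}\|_2^2 = (Z^{(i)}_j)^\top \Lambdai Z^{(i)}_j = \sum_k \lambdaik (Z^{(i)}_{j,k})^2$. Because the coordinates $Z^{(i)}_{j,k}$ are independent, zero-mean, unit-variance (by the normalization $\Sigmai = \Ep[\Xin{j}(\Xin{j})^\top]$) and sub-Gaussian with parameter $\kappa_x^2$, each term $W_j := \|\Xin{j}\|_2^2 - \trace(\Sigmai) = \sum_k \lambdaik ((Z^{(i)}_{j,k})^2 - 1)$ is a centered sub-exponential random variable, and $\trace(\hatSigmai) - \trace(\Sigmai) = N^{-1}\sum_{j=1}^N W_j$.

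Next I would apply the Hanson--Wright inequality to the stacked vector $(Z^{(i)}_1,\dots,Z^{(i)}_N)$ with the block-diagonal matrix $N^{-1} I_N \otimes \Lambdai$, whose Frobenius norm is $N^{-1/2}(\sum_k (\lambdaik)^2)^{1/2}$ and operator norm is $N^{-1}\lambdainum{1}$. This yields, for $t>0$,
\[
\Pr\!\left[|\trace(\hatSigmai)-\trace(\Sigmai)|>t\right]\le 2\exp\!\left(-c\min\!\left(\frac{N t^2}{\kappa_x^4 \sum_k (\lambdaik)^2},\frac{N t}{\kappa_x^2 \lambdainum{1}}\right)\right).
\]
Setting $t = \varepsilon\,\trace(\Sigmai)$ with $\varepsilon \asymp \sqrt{\CPoly (\log T)/N}\cdot R_0(\Sigmai)^{-1/2}$ makes the sub-Gaussian branch $Nt^2/\sum_k(\lambdaik)^2 = N\varepsilon^2 R_0(\Sigmai)$ equal to $\Theta(\CPoly \log T)$, giving probability at least $1 - T^{-\CPoly}$. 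One routine check is that this $t$ lies in the variance-dominated branch, i.e.\ $t\le \sum_k(\lambdaik)^2/\lambdainum{1}$; substituting the chosen $\varepsilon$ the relevant ratio equals $\sqrt{(\log T)/N}\,\lambdainum{1}/\sqrt{\sum_k(\lambdaik)^2}$, which tends to $0$ for both examples. Dividing by $\trace(\Sigmai)$ and absorbing $\sqrt{(\log T)/N}\le\sqrt{\log T}$ into the $\tilde{O}$ gives the first equality in \eqref{ineq_traceerror}. Finally I would establish the $o(1)$ claim by computing $R_0(\Sigmai)$: for Example 1, $\sum_k\lambdaik = \zeta(1+1/T^a)=\Theta(T^a)$ while $\sum_k(\lambdaik)^2=\Theta(1)$, so $R_0(\Sigmai)^{-1/2}=\Theta(T^{-a})$; for Example 2, $\sum_k\lambdaik=\Theta(T^{c(1-b)})$ and $\sum_k(\lambdaik)^2 = \Theta(T^{c(1-2b)})$ for $b<1/2$ (and $\Theta(1)$ for $b>1/2$), giving $R_0(\Sigmai)^{-1/2}=\Theta(T^{-c/2})$ (resp.\ $\Theta(T^{-c(1-b)})$). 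In every case $R_0(\Sigmai)^{-1/2}$ decays polynomially in $T$, dominating the polylogarithmic factors hidden in $\tilde{O}$, hence $o(1)$.

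The main obstacle I expect is the bookkeeping of the sub-exponential tail: verifying that the chosen deviation $t$ stays in the sub-Gaussian (variance-dominated) branch of Hanson--Wright uniformly in $T$, and confirming that the residual $\sqrt{(\log T)/N}$ factor is harmless under $N=\Theta(T)$. The effective-rank computations themselves are routine once the correct asymptotics of $\sum_k\lambdaik$ and $\sum_k(\lambdaik)^2$ are pinned down for the two data-generating processes.
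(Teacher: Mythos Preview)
Your proposal is correct and essentially identical to the paper's argument: the paper also rewrites $\trace(\hatSigmai)-\trace(\Sigmai)$ as $N^{-1}\trace(\Lambdai((\bfZi)^\top\bfZi - I))$ and applies a sub-exponential concentration inequality (packaged as an auxiliary lemma citing Lemma~12 of \cite{bartlett2020benign}, which is the same Hanson--Wright-type bound you invoke) with $\eta=\Theta(\log T)$, then plugs in the eigenvalue asymptotics of Examples~1 and~2 exactly as you do. Your treatment is in fact slightly more careful, since you verify the sub-Gaussian branch dominates and keep track of the $\sqrt{(\log T)/N}$ factor before discarding it; note that this factor is at most $\sqrt{\log T}$ for any $N\ge 1$, so your closing worry about needing $N=\Theta(T)$ is unnecessary.
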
%
Moreover, the following bounds the estimation error of each eigenvalue.
\begin{lemma} \label{lem:eigenvalue}
Suppose Assumption \ref{asmp:subgaussian}.
For any $\delta \in (0,1)$, for any $\CPoly>0$, with probability at least $1 - T^{-\CPoly}$, we have the following for any $j = 1,...,p$ as $T \to \infty$:
\begin{align}\label{ineq_eigenerror}
    \left|\hat{\lambda}_j^{(i)} - \lambda_j^{(i)} \right| = O\left( \sqrt{\frac{\trace(\Sigmai)}{N}} 
    \right).
\end{align}
\end{lemma}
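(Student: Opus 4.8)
The plan is to reduce the per-coordinate eigenvalue bound to a single concentration inequality for the operator norm of the empirical covariance, and then invoke a known dimension-free bound for sample covariances of sub-Gaussian vectors. First I would apply Weyl's inequality: for symmetric operators the $j$-th eigenvalues differ by at most the operator norm of their difference, simultaneously in $j$. Applied to $\hatSigmai$ and $\Sigmai$ this gives
\begin{align*}
\max_j \left| \hat{\lambda}_j^{(i)} - \lambda_j^{(i)} \right| \leq \|\hatSigmai - \Sigmai\|_{\mathrm{op}}.
\end{align*}
This is the crucial step, since it removes all dependence on the index $j$ and incurs no union-bound penalty over $j$: a single high-probability control of the operator norm automatically covers all eigenvalues at once. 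It also cleanly handles the degenerate indices $j > N$, where the rank-$N$ matrix $\hatSigmai$ has $\hat{\lambda}_j^{(i)} = 0$ and the bound simply reads $\lambda_j^{(i)} \le \|\hatSigmai - \Sigmai\|_{\mathrm{op}}$.

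Next I would bound $\|\hatSigmai - \Sigmai\|_{\mathrm{op}}$ using a concentration inequality for the empirical covariance of sub-Gaussian vectors, such as the Koltchinskii–Lounici bound \citep{koltchinskii2017concentration}, which is applicable under Assumption \ref{asmp:subgaussian} because $\Xit = \Vi (\Lambdai)^{1/2} \Zit$ with $\Zit$ having independent sub-Gaussian entries. In its high-probability form, with probability at least $1 - e^{-u}$,
\begin{align*}
\|\hatSigmai - \Sigmai\|_{\mathrm{op}} \lesssim \|\Sigmai\|_{\mathrm{op}} \left( \sqrt{\frac{r(\Sigmai) + u}{N}} + \frac{r(\Sigmai) + u}{N} \right),
\end{align*}
where $r(\Sigmai) := \trace(\Sigmai)/\|\Sigmai\|_{\mathrm{op}}$ is the effective rank. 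Taking $u = \Theta(\CPoly \log T)$ gives the required confidence level $1 - 1/T^{\CPoly}$.

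I would then simplify using the spectral structure of the DGP. Since $\lambda_j^{(i)} = \Clambda j^{-\beta_T}$, the operator norm is $\|\Sigmai\|_{\mathrm{op}} = \lambda_1^{(i)} = \Clambda = O(1)$, so $\|\Sigmai\|_{\mathrm{op}}\, r(\Sigmai) = \trace(\Sigmai)$ and the leading term becomes $\sqrt{\|\Sigmai\|_{\mathrm{op}} \trace(\Sigmai)/N} = O(\sqrt{\trace(\Sigmai)/N})$. Under the benign/learnability conditions one has $N \gg \trace(\Sigmai)$, hence $r(\Sigmai)/N = o(1)$ and the linear term $r(\Sigmai)/N$ is dominated by the square-root term; moreover, since $\trace(\Sigmai)$ grows polynomially in $T$ (as $\Omega(T^a)$ or $\Omega(T^{c(1-b)})$), the confidence contribution $u = \Theta(\CPoly \log T)$ is absorbed into $r(\Sigmai)$ and produces no extra logarithmic factor. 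This is why the conclusion is a genuine $O(\cdot)$ rather than $\tilde{O}(\cdot)$. Combining the three steps yields the claim.

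The main obstacle I anticipate is purely in matching the hypotheses of the operator-norm concentration result to the present setting: verifying that Assumption \ref{asmp:subgaussian} supplies exactly the sub-Gaussian factorization the Koltchinskii–Lounici inequality requires, that its effective-rank parametrization coincides with $\trace(\Sigmai)/\|\Sigmai\|_{\mathrm{op}}$, and that the polynomial growth of $\trace(\Sigmai)$ indeed dominates both $\CPoly \log T$ and $r(\Sigmai)/N$ so that the square-root term governs. The Weyl reduction itself is routine; the delicate part is confirming these scalings so that the clean $O(\sqrt{\trace(\Sigmai)/N})$ rate holds uniformly in $j$ without polylogarithmic corrections.
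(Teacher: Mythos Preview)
Your proposal is correct and follows essentially the same approach as the paper: the paper also reduces to the operator norm via the eigenvalue perturbation bound (citing Lemma~4.2 of \citet{bosq2000linear}, which is precisely Weyl's inequality) and then applies the Koltchinskii--Lounici operator-norm concentration (Corollary~2 of \citet{koltchinskii2017concentration}) with $\eta = \CPoly \log T$. Your added discussion of why the linear term $r(\Sigmai)/N$ is dominated and why $\CPoly\log T$ is absorbed into $\trace(\Sigmai)$ is more explicit than the paper's one-line conclusion, but the argument is the same.
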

Lemma \ref{lem:eigenvalue} implies the estimator of each eigenvalue is $o(1)$ if we choose $N \gg \trace(\Sigmai)$. However, even if we choose a large $N$, the error is still non-negligible for the tail of eigenvalues where $|\hat{\lambda}_j^{(i)} - \lambda_j^{(i)} |$ is very small\footnote{Remember that we consider $\Sigmai = \Sigmai(T)$ that depends on $T$. Tail of eigenvalues are $o(1)$ to $T$ as well.}.
Consequently, $\hat{\lambda}_j^{(i)}$ for large $j$ are not necessarily consistent, so as to the effective rank and the coherent rank estimated from them. To address this, we estimate the decay rate $\beta_T$, and then estimate the subsequent statistics.

Let the estimated decay rate be $\hat{\beta}_T = (1/\tau)\sum_{k=1}^\tau \log(\hat{\lambda}_k/\hat{\lambda}_{k+1})/\log((k+1)/k)$.
Theoretically, $\hat{\beta}_T$ is consistent for any constant $\tau > 1$. In practice, we can use a reasonable constant $\tau$, such as $\tau = 10$, for robustness.
To estimate the effective ranks, we use the following form
\[
\tilde{\lambda}_k^{(i)} = \hat{\lambda}_1^{(i)} k^{-\hat{\beta}_T}.
\]
Namely, we consider empirical analogues of the effective rank for $k$ as
\begin{align*}
    \hat{r}_k(\Sigmai) &:= 
    {\frac{
    \trace(\hatSigmai)
    }{\tilde{\lambda}_{k+1}^{(i)}},}
    ~~\mbox{and}~~
    \hat{R}_k(\Sigmai) := 
    {\frac{(
    \trace(\hatSigmai)
    )^2}{\sum_{j > k} (\tilde{\lambda}_j^{(i)})^2}}.
\end{align*}
We also define an estimator for the coherent rank as $\hat{k}_{N} := \min \{k \in \N \cup \{0\} \mid  \hat{r}_k(\Sigmai) \geq N\}$.
Further, we define estimators of $B_{N,T}^{(i)}$, $V^{(i)}_{N,T}$ of Eq.~\eqref{def:effective_bias_var} as
\begin{align*}
     \hat{B}_{N,T}^{(i)} := \tilde{\lambda}^{(i)}_{\hat{k}_{N}},\mbox{~and~}
     \hat{V}_{N,T}^{(i)} := \left( \frac{\hat{k}_{N}}{N} + \frac{N}{ \hat{R}_{\hat{k}_{N}}(\Sigmai)}  \right),
\end{align*}
Note that the estimators $\hat{r}_k$ and $\hat{R}_k$ use the trace $\trace(\hatSigmai)$, which is a sum of eigenvalues, instead of the partial sum of the eigenvalues that constitutes the effective rank of Eq.~\eqref{def:effective_rank}. 
Despite this change, this does not affect\footnote{This is because $\sum_j j^{-\beta_T}$ with $\beta_T <1 $ or $\beta_T \approx 1$ is tail-heavy.} asymptotic consistency of the coherent rank estimator $\hat{k}_{N}$.

The following lemma states that small $\tau$ suffices to assure the quality of $\hat{\beta}^{(i)}$.
We study a convergence rate of $\hat{\beta}^{(i)}$ and the other estimators as follows.
\begin{lemma}\label{lem_alphaest}
Suppose Assumption \ref{asmp:subgaussian}. Suppose DGP of Example 1 or Example 2 in Proposition \ref{prop_examples}.\footnote{Note that DGP of Example 1 or 2 implies the existence of the error function of Assumption \ref{asmp_error}.}
Assume that we choose $N = N(T)$ such that $\trace(\Sigmai)/N = o(1)$ for all $i$.
Then, for any $\CPoly>0$, with probability at least $1 - 1/T^{-\CPoly}$,
we have 
\[
\frac{|\tilde{\lambda}_k^{(i)} - \lambdaik|}{\lambdaik} = o(1),
\]
as $T \to \infty$ for all $k \in [\tau]$.
Moreover, it implies $|\beta_T - \hat{\beta}_T| = o(1)$
and 
\begin{equation}\label{ineq_statsconsistency}
\max\left\{\frac{\hat{B}_{N,T}^{(i)} }{ B_{N,T}^{(i)} },
\frac{\hat{V}_{N,T}^{(i)} }{ V_{N,T}^{(i)} } \right\}
= T^{o(1)}.
\end{equation}
\end{lemma}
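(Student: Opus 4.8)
The plan is to establish the three assertions in order: (i) the relative consistency of the reshaped eigenvalues $\tilde\lambda_k^{(i)}$ for $k\in[\tau]$, (ii) the consistency $|\hat\beta_T-\beta_T|=o(1)$, and (iii) the ratio bound \eqref{ineq_statsconsistency}. Everything will be argued on the single high-probability event supplied by \lemref{lem:eigenvalue} (and \lemref{lem_traceest}), which already carries mass $\ge 1-1/T^{\CPoly}$, so no further union bound degrades the probability. For (i) and (ii) I would first note that for any fixed $k\le\tau+1$ the truth $\lambdaik=\Clambda k^{-\beta_T}$ is $\Theta(1)$, since $\beta_T\le 1+1/T^a$ is bounded and $k$ is constant; then \lemref{lem:eigenvalue} gives $|\hat\lambda_k^{(i)}-\lambdaik|=O(\sqrt{\trace(\Sigmai)/N})=o(1)$ by the hypothesis $\trace(\Sigmai)/N=o(1)$, so the relative errors $\epsilon_k:=(\hat\lambda_k^{(i)}-\lambdaik)/\lambdaik$ are $o(1)$. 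Taking logarithms, $\log(\hat\lambda_k^{(i)}/\hat\lambda_{k+1}^{(i)})=\beta_T\log((k+1)/k)+O(\epsilon_k+\epsilon_{k+1})$; dividing by the positive constant $\log((k+1)/k)$ and averaging over $k\in[\tau]$ yields $\hat\beta_T=\beta_T+o(1)$. For the first display I would write $\tilde\lambda_k^{(i)}/\lambdaik=(\hat\lambda_1^{(i)}/\Clambda)\,k^{-(\hat\beta_T-\beta_T)}$, where the prefactor is $1+o(1)$ and, $k\le\tau$ being constant, $k^{-(\hat\beta_T-\beta_T)}=\exp(-(\hat\beta_T-\beta_T)\log k)=1+o(1)$.

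The substantive content is \eqref{ineq_statsconsistency}, and the organizing principle is that an additive error $o(1)$ in an exponent becomes a multiplicative factor $T^{o(1)}$ once it sits over a base that is polynomial in $T$. First I would record the tail-heaviness fact that, for $k$ up to a constant multiple of the coherent rank, $\sum_{j>k}\lambdaij=(1+o(1))\trace(\Sigmai)$: in Example 2 one checks $k^*(\Sigmai,N)\ll p=T^c$ (equivalently $(1-c(1-b))/b<c$, which follows from $c>1$), so the removed head is negligible against the $p$-dominated trace; in Example 1 one uses $\log k^*=O(\log T)=o(T^a)$ so that $(k^*)^{1-\beta_T}=(k^*)^{-1/T^a}\to1$. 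This justifies, up to $1+o(1)$, replacing the partial tail sums in $r_k(\Sigmai)$ and $R_k(\Sigmai)$ by the trace, which is exactly the substitution $\hat r_k,\hat R_k$ make, and shows the use of $\trace(\hatSigmai)$ is harmless.

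Next I would compare coherent ranks. Solving the defining thresholds gives $k^*=(1+o(1))(N\Clambda/\trace(\Sigmai))^{1/\beta_T}$ and $\hat k_N^{(i)}=(1+o(1))(N\hat\lambda_1^{(i)}/\trace(\hatSigmai))^{1/\hat\beta_T}$. By \lemref{lem_traceest} the trace ratio and by step (i) the $\hat\lambda_1^{(i)}$ ratio are $1+o(1)$, while the base $N/\trace(\Sigmai)$ is a positive power of $T$ ($\Theta(T^{1-a})$ in Example 1, $\Theta(T^{1-c(1-b)})$ in Example 2, by the learnability conditions). Since the exponents differ by $|1/\hat\beta_T-1/\beta_T|=o(1)$, raising this polynomial base to the exponent gap produces precisely a $T^{o(1)}$ factor, so $\hat k_N^{(i)}=k^*\,T^{o(1)}$. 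I then propagate. For the bias, $\hat B_{N,T}^{(i)}/B_{N,T}^{(i)}=(\hat\lambda_1^{(i)}/\Clambda)(k^*)^{\beta_T}(\hat k_N^{(i)})^{-\hat\beta_T}$, and feeding in $\hat k_N^{(i)}=k^*T^{o(1)}$, $\hat\beta_T=\beta_T+o(1)$, together with $(k^*)^{o(1)}=T^{o(1)}$ (as $k^*$ is polynomial in $T$), collapses this to $T^{o(1)}$. For the variance I treat the two summands separately: $\hat k_N^{(i)}/N$ against $k^*/N$ has ratio $T^{o(1)}$, while $N/\hat R_{\hat k_N^{(i)}}$ against $N/R_{k^*}$ reduces, after the tail-sum$\,\approx\,$trace cancellation in the numerators, to the ratio of squared-eigenvalue tails $\sum_{j>\hat k_N^{(i)}}(\tilde\lambda_j^{(i)})^2$ and $\sum_{j>k^*}(\lambdaij)^2$; evaluating these power-law tails (the regimes $2\beta_T>1$ and $2\beta_T<1$ give $k$-dominated and $p$-dominated asymptotics respectively) yields a $T^{o(1)}$ factor by the same exponent-gap argument. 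As each summand of $\variNT$ scales by $T^{o(1)}$, so does the sum, giving $\hat V_{N,T}^{(i)}/V_{N,T}^{(i)}=T^{o(1)}$ and hence \eqref{ineq_statsconsistency}.

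The hard part will be the amplification in step (iii): the additive error $o(1)$ in $\hat\beta_T$ is benign for the fixed top indices but becomes genuinely lossy once it enters an exponent over the polynomially large bases $k^*$, $p$, and $N/\trace(\Sigmai)$, and this is exactly why the conclusion is $T^{o(1)}$ rather than $1+o(1)$. The crux is to verify that the inflation is never worse than $T^{o(1)}$ — in particular that neither the tail-sum$\,\approx\,$trace substitution nor the two regimes $2\beta_T\lessgtr1$ secretly contribute a larger power — with the boundary case $2\beta_T\to1$ (where the squared-eigenvalue tail turns logarithmic) requiring a separate but routine check that is absorbed harmlessly into the $T^{o(1)}$.
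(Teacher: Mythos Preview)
Your proposal is correct and follows essentially the same route as the paper: both rely on \lemref{lem:eigenvalue} for top-eigenvalue consistency, derive $|\hat\beta_T-\beta_T|=o(1)$ from log-ratios (the paper isolates this as \lemref{lem:betaest}), use the tail-heaviness $\sum_{j>k}\lambdaij\approx\trace(\Sigmai)$ to justify the trace substitution, and then amplify the $o(1)$ exponent error over a base polynomial in $T$ to obtain the $T^{o(1)}$ factor. The only organizational difference is that the paper packages the intermediate steps as ratio bounds on the effective ranks, $\log(\hat r_k/r_k)=\log(T^{o(1)})$ and $\log(\hat R_{\hat k_n}/R_{k_n^*})=\log(T^{o(1)})$ (Lemmas~\ref{lem:bound_small_r} and~\ref{lem:bound_large_R}), and reads off $\hat k_N/k^*=T^{o(1)}$ from the first of these, whereas you solve the threshold equations for $k^*$ and $\hat k_N$ in closed form and compare directly---the content is the same.
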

Eq.~\eqref{ineq_statsconsistency} states that the estimated rate of error is accurate as $T\rightarrow \infty$. To see this, for Example 1 in Proposition \ref{prop_examples}, the estimated error rate is $T^{o(1)}(\sqrt{T^{a}/N + T^{-a}})$, whose exponent approaches to $\sqrt{T^{a}/N  + T^{-a}}$ as $T \rightarrow \infty$.

\subsection{Adaptive Explore-then-Commit (AEtC)}

\begin{algorithm}[t]
\caption{Adaptive Explore-then-Commit (AEtC)}\label{alg:unknown_alpha}
\begin{algorithmic}
\WHILE{$t = 1,2,3,...$}
    \STATE Observe $\Xit$ for all $i \in [K]$.
    \STATE Choose $\Ich = t - K\lfloor t/K \rfloor$.
    \STATE Receive a reward $\Yt$.
    \IF{$t \in \{ \lfloor e^N \rfloor \mid N \in \N, N \ge \log T\}$} 
        \IF{$\Stop(t/K)$}
            \STATE $T_0 \leftarrow t$.
            \STATE Break.
        \ENDIF
    \ENDIF
\ENDWHILE
\FOR{$i \in [K]$}
    \STATE $\hatthetai \leftarrow (\bfXi)^\top (\bfXi (\bfXi)^\top )^{-1} \bfYi$.
\ENDFOR
\FOR{$t = T_0 + 1,...,T$}
    \STATE Observe $\Xit$ for all $i \in [K]$.
    \STATE Choose $\Ich = \argmax_{i \in [K]}  \langle \Xit, \hatthetai \rangle$.
    \STATE Receive a reward $\Yt$.
\ENDFOR
\end{algorithmic}
\end{algorithm}

The Adaptive Explore-then-Commit (AEtC) algorithm is described in \algoref{alg:unknown_alpha}. 
Unlike EtC, the amount of exploration in AEtC is adaptively determined based on the stopping condition:
\[
\Stopi(N) = \left\{
N > \CStopTr\,\trace(\hatSigmai) \cap 
NK \ge T \sqrt{\hat{B}_{N,T}^{(i)} + \hat{V}_{N,T}^{(i)}}
\right\},
\]
and $\Stop = \cap_{i\in[K]} \Stopi$, where $\CStopTr = \CStopTr(T)$ is a function of $T$ that slowly diverges to $+\infty$.
The first condition $N > \CStopTr\,\trace(\hatSigmai)$ ensures that $N$ is large enough to have consistency on the estimators, whereas the second condition $NK \ge T \sqrt{\hat{B}_{N,T}^{(i)} + \hat{V}_{N,T}^{(i)}}$ balances the exploration and exploitation.
The following theorem bounds the regret of AEtC.
\begin{theorem} \label{thm:main2}
Suppose Assumption \ref{asmp:subgaussian}. Suppose DGP of Example 1 or Example 2 in Proposition \ref{prop_examples}.
Then, for any $c>0$, the regret \eqref{def:regret} when we run the AEtC algorithm (\algoref{alg:unknown_alpha}) with a sufficiently slowly diverging $\CStopTr$ is bounded as follows as $T \to \infty$:
\updated{
\begin{align*}
    R(T)  = \tilde{O}(\updated{L(T,K)}T^{c}),
\end{align*}
where $L(T,K)$ is the function defined in Theorem \ref{thm:upper_bound}.
}
\end{theorem}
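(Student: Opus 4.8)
The plan is to prove that AEtC's data-driven stopping selects a per-arm sample size $\hat N := T_0/K$ that lies within a $T^{o(1)}$ multiplicative factor of the oracle choice $N^*$ of Theorem \ref{thm:upper_bound}, and then to inherit the EtC regret bound at the cost of this slack, which is dominated by $T^c$ for any fixed $c>0$. I would first fix the high-probability event $\mathcal E$ on which Lemmas \ref{lem_traceest}, \ref{lem:eigenvalue}, and \ref{lem_alphaest} hold simultaneously; a union bound gives $\Pr(\mathcal E)\ge 1-O(T^{-\CPoly})$ for any prescribed $\CPoly$. On $\mathcal E^c$ I bound the regret trivially by $O(T)$, contributing $O(T^{1-\CPoly})$ to the expected regret, which is negligible once $\CPoly>1$; the remainder of the argument lives on $\mathcal E$.

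Next I would dissect the two clauses of $\Stopi$. By Lemma \ref{lem_traceest} the first clause $N>\CStopTr\,\trace(\hatSigmai)$ matches $N>\CStopTr\,\trace(\Sigmai)$ up to a $(1+o(1))$ factor; since the learnability gaps in Proposition \ref{prop_examples} give $N^*\gg\trace(\Sigmai)$ polynomially while $\CStopTr$ diverges sub-polynomially, this clause is already met at some $N_A = T^{o(1)}\trace(\Sigmai)\ll N^*$. Hence at all checkpoints with $N\ge N_A$ — in particular throughout a neighborhood of $N^*$ — Lemma \ref{lem_alphaest} applies and yields $\sqrt{\hat B_{N,T}^{(i)}+\hat V_{N,T}^{(i)}} = T^{\pm o(1)}\sqrt{\biasiNT+\variNT}=T^{\pm o(1)}\Err(N,T)$ via Assumption \ref{asmp_error}. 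Thus the binding second clause $NK\ge T\sqrt{\hat B_{N,T}^{(i)}+\hat V_{N,T}^{(i)}}$ is, on $\mathcal E$, the oracle condition $NK\ge T\,\Err(N,T)$ perturbed only by a $T^{o(1)}$ factor.

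I would then convert this into $N^*T^{-o(1)}\le\hat N\le N^*T^{o(1)}$. Writing $\phi(N):=NK/(T\,\Err(N,T))$, which is increasing with $\phi(N^*)\asymp1$, the rule stops at the first checkpoint with $\phi(\hat N)\ge T^{-o(1)}$, while the preceding checkpoint fails, $\phi(\hat N/e)<T^{o(1)}$. Because $\Err$ is locally of power-law type near $N^*$ in both examples ($\phi(N)=\Theta(N^\gamma)$ for a fixed $\gamma>0$), inverting moves the crossing by only $T^{o(1)/\gamma}=T^{o(1)}$, and the geometric checkpoint spacing adds a constant factor absorbed into $T^{o(1)}$. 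Since contexts are independent across rounds, conditioning on the exploration data leaves the exploitation analysis of Theorem \ref{thm:upper_bound} valid with $\hat N$ in place of $N^*$: exploration costs $\hat N K\le T^{o(1)}N^*K$ and exploitation costs $\tilde O(T\,\Err(\hat N,T))\le T^{o(1)}\tilde O(T\,\Err(N^*,T))$, using $\hat N\ge T^{-o(1)}N^*$ and the power law. Both are $T^{o(1)}$ times the balanced optimum $N^*K\asymp T\,\Err(N^*,T)$, which Theorem \ref{thm:upper_bound} identifies as $\tilde O(T^\alpha K^{2/3})$, so $R(T)=T^{o(1)}\tilde O(T^\alpha K^{2/3})=\tilde O(T^{\alpha+c}K^{2/3})$ for any fixed $c>0$ and all large $T$.

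The step I expect to be the main obstacle is reconciling the two clauses. Lemma \ref{lem_alphaest}'s consistency — the engine making the second clause faithful — itself requires $\trace(\Sigmai)/N=o(1)$, precisely what the first clause enforces; yet the first clause must not push $\hat N$ past $N^*$. Compatibility forces $\CStopTr$ into a window that is simultaneously divergent (so $\CStopTr\to\infty$ secures consistency) and sub-polynomial (so $\CStopTr\,\trace(\Sigmai)=o(N^*)$). Verifying that this window is nonempty for each DGP — via $N^*\gg T^a$ for Example 1 and $N^*\gg T^{c(1-b)}$ for Example 2 — is the delicate quantitative point; once it is secured, the rest is bookkeeping of $T^{o(1)}$ factors.
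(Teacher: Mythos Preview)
Your proposal is correct and follows essentially the same route as the paper's proof: fix the high-probability event on which Lemmas~\ref{lem_traceest}, \ref{lem:eigenvalue}, and~\ref{lem_alphaest} hold (the paper takes a union bound over all $N\in[T]$ with $\CPoly=2$), conclude that the empirical stopping criterion matches the oracle one up to a $T^{o(1)}$ factor so that $\hat N$ lies in $[T^{-o(1)}N^*,\,T^{o(1)}N^*]$, and then reuse the EtC regret decomposition of Theorem~\ref{thm:upper_bound}. Your write-up is in fact more explicit than the paper's sketch on two points the paper leaves implicit---the compatibility of the trace clause with $N^*$ (your ``window'' for $\CStopTr$) and the power-law inversion argument that converts a $T^{o(1)}$ perturbation of $\phi$ into a $T^{o(1)}$ perturbation of the crossing---so nothing essential is missing.
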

Theorem \ref{thm:main2} states that, we can choose arbitrarily small $c>0$ and the exponent of the regret of AEtC approaches that of EtC (Theorem \ref{thm:upper_bound}). 
Note that the condition $N \ge \CStopTr \trace(\hatSigmai)$ should not dominate the balance between exploration and exploitation: For example, $\CStopTr = C \log T$ for any $C>0$ suffices. In practice, a constant $\CStopTr$ works.
\begin{proof}[Proof sketch of Theorem \ref{thm:main2}]
We can derive that Eqs.~\eqref{ineq_traceerror}, ~\eqref{ineq_eigenerror}, and \eqref{ineq_statsconsistency} for all $N \in [T]$ that hold with probability at least $1-O(1/T)$ by setting $\CPoly = 2$ and taking a union bound over $N$, and thus
\begin{align}
\bigcap_{i\in[K]} \bigcap_{N \ge \CStopTr\trace(\hatSigmai)} 
\left\{
\log\left(
\frac{\sqrt{\hat{B}_{N,T}^{(i)} + \hat{V}_{N,T}^{(i)}}}{\sqrt{B_{N,T}^{(i)} + V_{N,T}^{(i)}}}
\right)
= \log(T^{o(1)})
\right\}
\end{align}
holds. Under this event, the stopping time of AEtC and EtC are at most $T^{o(1)}$ times different.
\if0
\begin{align}
NK &\sim T \sqrt{\hat{B}_{N,T}^{(i)} + \hat{V}_{N,T}^{(i)}}\\
NK &\sim T \sqrt{B_{N,T}^{(i)} + V_{N,T}^{(i)}}
\end{align}
are at most $T^{o(1)}$ times different.
\fi
Given this, the proof of the theorem is easily modified from the proof of Theorem \ref{thm:upper_bound}.
\end{proof}

\begin{figure*}[htbp]
\centering
\begin{tabular}{cccc}
\includegraphics[width=0.23\textwidth]{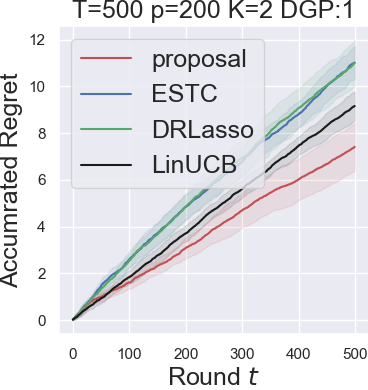} &
\includegraphics[width=0.23\textwidth]{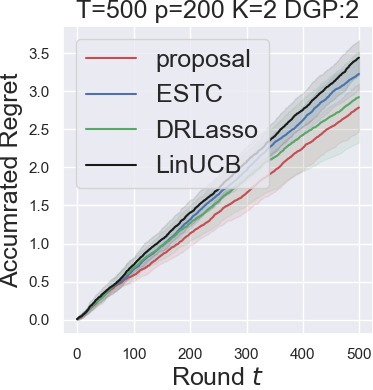} &
\includegraphics[width=0.23\textwidth]{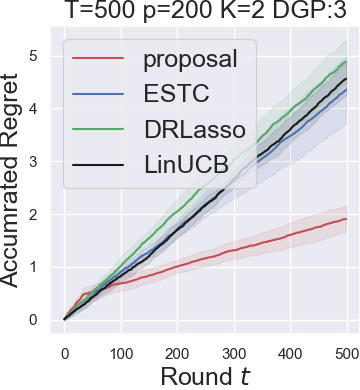} &
\includegraphics[width=0.23\textwidth]{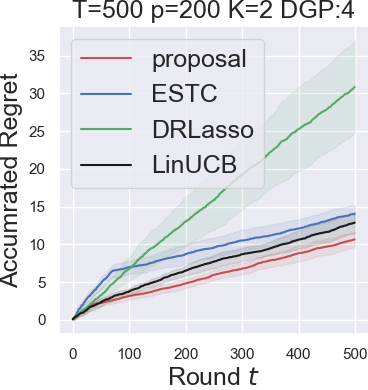} \\
\includegraphics[width=0.23\textwidth]{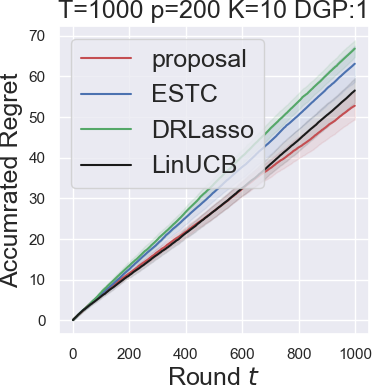} &
\includegraphics[width=0.23\textwidth]{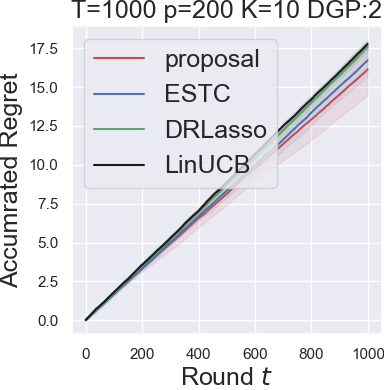} &
\includegraphics[width=0.23\textwidth]{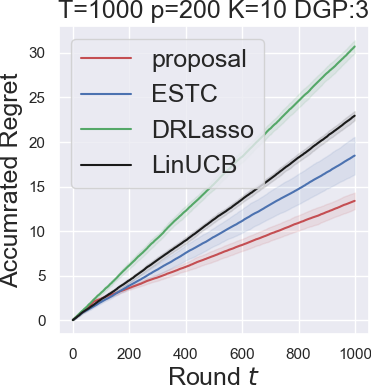} &
\includegraphics[width=0.23\textwidth]{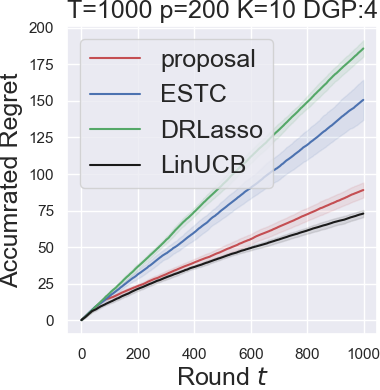} \\
\end{tabular}
\caption{Comparison of methods on four DGPs. Smaller regret indicates better performance.
The solid lines are the mean of $10$ repetitions and the bands represent the standard deviation.
\label{fig:regret}}
\end{figure*}

\section{Simulation}\label{sec_sim}

We consider two setups: $(K,p,T) = (2,200,500)$ and $(K,p,T) = (10,200,1000)$.
For each setup, we consider a covariance matrix $\Sigmai = c^{(i)} \bar{\Sigma}$ where $c^{(i)} \sim \mathrm{Uni}([0.5,1.5])$ and a base covariance $\Bar{\Sigma}$  for each $i \in [K]$, which represents a heterogeneous covariance among the arms.
The base covariance $\Bar{\Sigma}$ follows the following configurations: DGP 1: $\Bar{\Sigma}  = \mathrm{diag}(\lambda_{1},...,\lambda_{p}), \lambda_{k} = k^{-0.5}$, DGP 2: $\Bar{\Sigma} = \mathrm{diag}(\lambda_{1},...,\lambda_{p}), \lambda_{k} = \exp(-k) + T \exp(-T)/p$, DGP 3: $\Bar{\Sigma}  = \mathrm{diag}(\lambda_{1},...,\lambda_{p}), \lambda_{k} = k^{-1 + 1/T} $, and DGP 4: $\Bar{\Sigma}_{i,j} = 0.3$ and $\Bar{\Sigma}_{i,i} = 0.7$ for $j \neq i \in [p]$.
Note that the DGPs 1 and 3 correspond to Examples 2 and 1 in Proposition \ref{prop_examples}. DGP 3 is benign as well, while DGP 4 is not.
We generate $X^{(i)}(t)$ from a centered $p$-dimensional Gaussian with covariance $\Sigmai$, $\thetai$ from a standard normal distribution which yields non-sparse parameter vectors, and $Y^{(i)}(t)$ by the linear model \eqref{eq:linear_model} with the noise with variance $\sigma^2 = 0.01$.
We compare proposal (AEtC, $\CStopTr=2$) to ESTC \citep{hao2020high}, LinUCB \citep{li2010,abbasi2011improved}, and DR Lasso Bandit \citep{kim2019doubly}. 

Figure \ref{fig:regret} illustrates the accumulated regret $R(t)$ in relation to the round $t \in [T]$. In the first three benign DGPs, the AEtC consistently outperforms the other methods. 
In DGP 2, the advantage of AEtC is insignificant because the regret of any method is already small. This is because in DGP 2, a model with exponentially decaying behavior, only a small fraction of eigenvalues are important. However, AEtC performs significantly better than its competitors in DGP 1 and DGP 3, where the eigenvalues have a heavy-tail distribution.
In a non-benign example of DGP 4, the proposed method is still comparable to LinUCB, which demonstrates the utility of an interpolating estimator.

\section*{Acknowledgement}

We thank Shogo Iwazaki for valuable comments on revising the manuscript.

\bibliographystyle{plainnat}
\bibliography{main}

\newpage
\appendix
\onecolumn

\section{Limitations}

The following characterizes the limitations. We consider addressing these as interesting directions for future work.

\begin{itemize}[leftmargin=*]
\item \textbf{More adaptive algorithms, such as ones based on upper confidence bounds:} This paper considers the class of the explore-then-commit methods. In many bandit problems, the upper confidence bound (UCB) method provides better empirical performance since it adaptively balances exploration and exploitation. Applying UCB to this problem is important future work. 
The key challenge here is that such an adaptive estimator involves a biased selection of the context vectors, which requires a more adaptive error bound for a high-dimensional linear model, such as the self-normalized bound \citep{abbasi2011improved}.
\item \textbf{Lower bound of the problem:} While we showed the optimal choice of $T_0$ by deriving a matching bound, this does not exclude the possibility of a more adaptive bandit algorithm (e.g., UCB) that achieves a better rate of regret.
Explicit construction of the lower bound requires two different models where the probability of an accurate (low-regret) estimate in one model implies a misestimation in the other model. 
To our knowledge, such a process for our non-sparse high-dimensional model is challenging because, unlike the sparse bandit model where only a small amount of parameters are active, in the non-sparse regime, we need to devise two models where very large ($p > n$) number of non-equal coefficients and need to bound the KL divergence between such a large multivariate Gaussians.
\item \textbf{Temporal correlation:} This paper assumes temporal indepenence (i.e., $\Xit, X_i(t')$ are independent between $t \ne t'$). While such an assumption is popular, allowing the temporal correlation widens the application of the framework. For example, the click probability of online advertisements depends on the period of time. There are some recent results that are potentially applicable in non-sparse high-dimensional regime (e.g., \citet{nakakita2022benign}). 
\item \textbf{Exponentially decaying models:} 
\updated{
AEtC directly assumes the eigenvalue structure of the covariance matrix (in particular, Example 1 or Example 2 in Proposition \ref{prop_examples}).
}
Exploring the inclusion of exponentially decaying eigenvalues, such as Example 4 in Theorem 31 of \cite{bartlett2020benign}, would be an intriguing avenue to explore.
\item \updated{\textbf{Nested models:}
\cite{DBLP:conf/nips/FosterKL19} considered the contextual bandit problem under the nested model and introduced an algorithm that can adapt to the problem's complexity. Considering such a nested model should be interesting.
}
\end{itemize}

\section{\updated{Comparison with Sparse Bandits}}

\updated{
Similar to our setting, sparse linear bandit models accept a very large number of features\footnote{Typically, the number of feature $p$ can be exponential to the number of datapoints $T$.}. Sparse bandit algorithms employ the $\ell 1$ regularizer to suppress most coefficients. The regret of a sparse bandit algorithm is characterized by the number of non-zero features $s$. In our case, $s$ corresponds to $p$ since all dimensions exhibit non-zero values. Consequently, regret bounds in sparse bandit translate into $O(T)$ trivial bounds within our framework.
We consider the sparse and benign overfitting to be orthogonal. There are numerous examples that are benign yet not sparse, and conversely, sparse but not benign. The primary incentive for benign overfitting theory is to illuminate the learnability of recent large-scale models. To rephrase, we are examining different problem classes in which we can ensure sublinear regret bounds, irrespective of any assumptions about sparsity.
}

\section{Proofs on Risk Bounds}

We give proofs on the upper bound in Theorem \ref{thm_ub}.
The bound is derived from the risk bound in \citet{tsigler2020benign} and adapted for our setting. 
A significant difference here is that the budget $T$, which characterizes the covariance matrices $\Sigmai$, and the sample size $N$ used for estimation have different values.

We give some additional notation.
For a vector $b \in \R^p$ and $q \in [p]$, let $b_{0:q} := (b_1,b_2,...,b_{q})$ is a sub-vector.
Similarly, $b_{q:\infty} := (b_{q+1}, b_{q+2},...,b_p)$ is a sub-vector with the rest of the terms.
For a covariance matrix $\Sigma \in \R^{p \times p}$ with eigenvalues $\lambda_1,...,\lambda_p$, $\Sigma_{0:q} = \mathrm{diag}(\lambda_1,...,\lambda_{q})$ and $\Sigma_{q:\infty} = \mathrm{diag}(\lambda_{q+1},...,\lambda_{p})$ are diagonal matrices with the subset of eigenvalues.
Similarly, for the data matrix $\bfXi \in \R^{N \times p}$, $\bfXi_{0:q}$ denotes a sub-matrix with the first $q$ columns of $\bfXi$, and $\bfXi_{q:\infty}$ denotes a sub-matrix with the rest of the columns $\bfXi$.

\begin{proof}[Proof of Theorem \ref{thm_ub}]

For each $i \in [K]$, we study the bound on the risk $\|\hatthetai - \thetai\|_\Sigmai^2$.
Using the fact that $\bfYi = \bfXi \thetai + \Xi^{(i)}$ with $\Xi^{(i)} = (\xi^{(i)}(1),...,\xi^{(i)}(N))^\top \in \R^{N}$ where $\xi^{(i)}(t)$ is an i.i.d. copy of $\xi^{(i)}(\updated{1})$, we first decompose $\hatthetai$ as
\begin{align*}
    \hatthetai &= (\bfXi)^\top (\bfXi (\bfXi)^\top )^{-1} \bfYi \\
    &= (\bfXi)^\top (\bfXi (\bfXi)^\top )^{-1} \bfXi \thetai + (\bfXi)^\top (\bfXi (\bfXi)^\top )^{-1}\Xi^{(i)} \\
    &=: \Tilde{\theta}^{(i)} + \check{\theta}^{(i)}.
\end{align*}
Using the decomposition, we decompose the total error as
\begin{align*} 
    \|\Tilde{\theta}^{(i)} + \check{\theta}^{(i)} - \thetai\|_\Sigmai^2 \leq 2  \|\Tilde{\theta}^{(i)}  - \thetai\|_\Sigmai^2 + 2\| \check{\theta}^{(i)}\|_\Sigmai^2 =: 2T_B + 2T_V.
\end{align*}
Here, $T_B$ denotes a bias term of the risk, and $T_V$ denotes a variance term.

In the following, we develop a bound on each of the terms.
Fix $k \in [p]$.
By Corollary 6 in \citet{tsigler2020benign}, we achieve the following bounds with some constant $c > 0$, which holds with the desired probability.
Here, we set $\mu_{n,k}(A_k^{-1})$ is the $n$-th largest eigenvalue of $A_k^{-1}$ for $n \in [N]$.
\begin{align}
    &T_B \leq c \|\thetai_{k:\infty}\|_{\Sigmai_{k,\infty}}  + c\|\thetai_{0:k}\|_{(\Sigmai_{0:k})^{-1}} \left( \frac{ \sum_{j > k} \lambdainum{j}}{N} \right)^2, \mbox{~and~} \label{ineq:bias} \\
    &T_V \leq c \frac{k}{N} + c \frac{N \sum_{j > k} (\lambdainum{j})^2}{(\sum_{j > k} \lambdainum{j})^2}. \label{ineq:variance}
\end{align}
Note that only the number of samples $N$ appears explicitly in the boundary, while the budget $T$ affects it only implicitly through $\Sigmai$.

We study the bound for the bias part in \eqref{ineq:bias}.
Here, we set \updated{$k = k^*_N(\Sigmai, N)$} using the coherent rank.
By H\"older's inequality, we obtain
\begin{align*}
T_B &\leq \|\thetai_{k^*_N:\infty}\|_{\Sigmai_{k^*_N:\infty}}^{2}+\|\thetai_{0:k^*_N}\|_{\Sigmai_{0:k^*_N}^{-1}}^{2}\left(\frac{\sum_{j>k^*_N}\lambdainum{j}}{N}\right)^{2}\\
&\le \|\thetai\|^{2}\left(\lambdainum{k^*_N+1}+(\lambdainum{k^*_N})^{-1}\left(\frac{\sum_{j> k^*_N}\lambdainum{j}}{N}\right)^{2}\right)\\
&= \|\thetai\|^{2}\left(\lambdainum{k^*_N+1}+(\lambdainum{k^*_N})^{-1}\left(\frac{\sum_{j\ge k^*_N}\lambdainum{j}-\lambdainum{k^*_N}}{N}\right)^{2}\right)\\
&=\|\thetai\|^{2}\left(\lambdainum{k^*_N+1}+(\lambdainum{k^*_N})^{-1}(\lambdainum{k^*_N})^{2}\left(\frac{\sum_{j\ge k^*_N}\lambdainum{j}-\lambdainum{k^*_N}}{\lambdainum{k^*_N}N}\right)^{2}\right)\\
&=\|\thetai\|^{2}\left(\lambdainum{k^*_N+1}+\lambdainum{k^*_N}\left(\frac{r_{k^*_N-1}(\Sigmai)-1}{N}\right)^{2}\right),
\end{align*}
which follows the definition of the effective rank $r_k(\Sigmai)$.
We also apply the properties of the ranks and obtain
\begin{align*}
    \|\thetai\|^{2}\left(\lambdainum{k^*_N+1}+\lambdainum{k^*_N}\left(\frac{r_{k^*_N-1}(\Sigmai)-1}{N}\right)^{2}\right)
    &\le \|\thetai\|^{2}\left(\lambdainum{k^*_N+1}+\lambdainum{k^*_N}\left(\frac{N-1}{N}\right)^{2}\right)\\
    &\le  \|\thetai\|^{2}\left(\lambdainum{k^*_N+1}+\lambdainum{k^*_N}\right) \\
    & \leq c' \|\thetai\|^{2} \lambdainum{k^*_N} \\
    & \leq c' \|\thetai\|^{2} \biasiNT,
\end{align*}
where $c' := (1 + b^2)$ is a constant.
The second last inequality follows $\lambdainum{k^*_N + 1} \leq \lambdainum{k^*_N}$.
About the variance term $T_V$ in \eqref{ineq:variance}, we simply obtain $T_V \leq \variNT$ by their definitions.

Finally, we set $\CUpper = \max\{c' \|\thetai\|^2, c\}$ and obtain the statement.
\end{proof}

\section{Proofs for EtC}

\subsection{Concentration of the Maximum Value}

\begin{lemma}\label{lem_subgaussimax}
Let $Y_1, Y_2, \dots, Y_K$ be $K$ sub-Gaussian random variables with common parameter $\sigma$. 
Let $Y = \max_{i\in[K]} Y_i$ be the maximum of them\footnote{These random variables can be dependent each other.}. 
Then, we have
\begin{equation}
\Ep[Y] \le \sigma\sqrt{2 \log K}.
\end{equation}
\end{lemma}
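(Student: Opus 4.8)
The plan is to use the standard exponential-moment (Chernoff-style) argument together with Jensen's inequality, which is exactly the route that makes the dependence among the $Y_i$ irrelevant. The idea is to exponentiate, exploit monotonicity of $\exp$ to convert the maximum into a sum, apply the common sub-Gaussian moment generating function bound, and finally optimize a free parameter $\lambda > 0$.

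\textbf{Key steps.} First I would fix an arbitrary $\lambda > 0$ and apply Jensen's inequality to the convex map $x \mapsto \exp(\lambda x)$, giving $\exp(\lambda\, \Ep[Y]) \le \Ep[\exp(\lambda Y)]$. Next, since $\exp(\lambda \cdot)$ is increasing, $\exp(\lambda Y) = \max_{i \in [K]} \exp(\lambda Y_i)$, and I would bound the maximum by the sum,
\begin{equation*}
\Ep\sbr{\max_{i \in [K]} \exp(\lambda Y_i)} \le \sum_{i=1}^K \Ep\sbr{\exp(\lambda Y_i)}.
\end{equation*}
This is the only place the combinatorial factor $K$ enters, and crucially it requires no independence, since the inequality $\max \le \sum$ of nonnegative terms holds pointwise. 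Then the sub-Gaussian assumption supplies $\Ep[\exp(\lambda Y_i)] \le \exp(\sigma^2 \lambda^2 / 2)$ for each $i$, so that $\exp(\lambda\, \Ep[Y]) \le K \exp(\sigma^2 \lambda^2/2)$.

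\textbf{Optimization.} Taking logarithms and dividing by $\lambda$ yields
\begin{equation*}
\Ep[Y] \le \frac{\log K}{\lambda} + \frac{\sigma^2 \lambda}{2},
\end{equation*}
valid for every $\lambda > 0$. I would then minimize the right-hand side over $\lambda$; elementary calculus gives the optimizer $\lambda^\star = \sqrt{2 \log K}\,/\,\sigma$, and substituting $\lambda^\star$ back makes the two terms equal, each contributing $\sigma\sqrt{(\log K)/2}$, for a total of $\sigma\sqrt{2\log K}$, which is the claimed bound.

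\textbf{Main obstacle.} There is no serious obstacle here; the result is a textbook concentration fact. The only point worth flagging is the conceptual one that the argument deliberately avoids any independence assumption by replacing the maximum with a sum of moment generating functions, which is why the footnote emphasizing that the $Y_i$ may be dependent is justified. A minor care point is that the bound is stated for the expectation of the maximum (not its absolute value), so the one-sided sub-Gaussian MGF bound suffices and no symmetrization is needed.
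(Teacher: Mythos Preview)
Your proposal is correct and follows essentially the same argument as the paper: Jensen's inequality, bounding the maximum by the sum, applying the sub-Gaussian MGF bound, and optimizing over $\lambda$ to obtain $\lambda^\star = \sqrt{2\log K}/\sigma$. The paper's proof is identical step for step (modulo a harmless typo writing $N$ for $K$).
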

\begin{proof}[Proof of Lemma \ref{lem_subgaussimax}]
For any $\lambda > 0$, we have
\begin{align}
e^{\lambda \Ep[\max_{i\in[K]} Y_i]}
&\le \Ep\left[e^{\lambda \max_{i\in[K]} Y_i}\right]
\text{\ \ \ \ (by Jensen's inequality)}
\\
&=\Ep\left[\max_{i\in[K]} e^{\lambda Y_i}\right]\\
&\le \sum_i \Ep[e^{\lambda Y_i}] \le N e^{\lambda^2 \sigma^2 / 2},
\end{align}
and thus 
\[
\Ep[\max_{i\in[K]} Y_i] 
\le \frac{\log N}{\lambda} + \frac{\lambda \sigma^2}{2},
\]
and taking $\lambda = \sqrt{2 \log N / \sigma^2}$ yields the result.
\end{proof} 
\subsection{High-probability Confidence Bound}

\begin{lemma} \label{lem:li}
There exists $C >0$ such that, for any $T_0 \ge CK \log T$,
with probability at least $1-O(T^{-1})$, event
\begin{align}
\mA &= \bigcap_{i\in[K]} \mA_i\\
\mA_i &=
\left\{
\|\hatthetai - \thetai\|_{\Sigmai}
\le 
C \Err(T_0/K, T)
\right\}.
\end{align}

holds. Moreover, under $\mA$, we have
    \begin{align}\label{ineq_maxrad}
        &\Ep[\langle \Xarmt{i^*(t)}, \thetaarm{i^*(t)} \rangle - \langle \Xarmt{\Ich}, \thetaarm{\Ich} \rangle] 
        \le
        2 C \Err(T_0/K, T) \sqrt{2 \log K} 
    \end{align}
    for each $t=T_0+1,T_0+2,\dots,T$.
\end{lemma}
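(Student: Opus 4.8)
The plan is to prove the two assertions in turn. \emph{Establishing $\mA$.} I would apply \thmref{thm_ub} to each arm $i \in [K]$ with $N = T_0/K$ exploration draws; its hypotheses (smallness of the coherent rank and positivity of the conditional number of $A_k^{(i)}$) are guaranteed by \asmpref{asmp:subgaussian}, so for each arm $\|\hatthetai - \thetai\|_{\Sigmai}^2 \le \CUpper(\biasiNT + \variNT)$ holds with probability at least $1 - 2\cUpper e^{-N/\cUpper}$. Taking square roots and invoking \asmpref{asmp_error}, which identifies $(\biasiNT + \variNT)^{1/2}$ with $\Err(N,T)$ up to constant and polylogarithmic factors, gives $\|\hatthetai - \thetai\|_{\Sigmai} \le C\,\Err(T_0/K, T)$ once $C$ absorbs $\sqrt{\CUpper}$ and those factors. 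Because $N = T_0/K \ge C\log T$, the per-arm failure probability is at most $2\cUpper e^{-C\log T/\cUpper} = 2\cUpper T^{-C/\cUpper}$, and a union bound over the $K$ (moderate) arms with $C$ taken large enough relative to $\cUpper$ yields $\Pr[\mA] \ge 1 - O(T^{-1})$.

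\emph{Reducing the per-round regret to maxima.} For a fixed exploitation round $t > T_0$ I would condition on the exploration data, so that every $\hatthetai$ and the event $\mA$ are determined while the fresh contexts $\Xit$ remain independent of them. The greedy rule $\Ich = \argmax_{i} \langle \Xit, \hatthetai\rangle$ gives $\langle \Xarmt{\Ich}, \hatthetaarm{\Ich}\rangle \ge \langle \Xarmt{\Iopt}, \hatthetaarm{\Iopt}\rangle$, so adding and subtracting the predicted rewards and discarding this nonpositive term telescopes the instantaneous regret into
\[
\langle \Xarmt{\Iopt}, \thetaarm{\Iopt}\rangle - \langle \Xarmt{\Ich}, \thetaarm{\Ich}\rangle
\le \max_{i\in[K]}\langle \Xarmt{i}, \thetaarm{i} - \hatthetaarm{i}\rangle + \max_{i\in[K]}\langle \Xarmt{i}, \hatthetaarm{i} - \thetaarm{i}\rangle.
\]
It therefore suffices to bound $\Ep[\max_{i} \langle \Xarmt{i}, v^{(i)}\rangle]$ for the fixed vectors $v^{(i)} = \thetai - \hatthetai$, the second maximum being identical up to sign.

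\emph{The maximal inequality.} The key computation is that each $\langle \Xit, v^{(i)}\rangle$ is sub-Gaussian with parameter controlled by the $\Sigmai$-norm. Writing $\Xit = \Vi(\Lambdai)^{1/2}\Zit$ as in \asmpref{asmp:subgaussian} and setting $w^{(i)} = (\Lambdai)^{1/2}(\Vi)^\top v^{(i)}$, I obtain $\langle \Xit, v^{(i)}\rangle = \langle \Zit, w^{(i)}\rangle$, which is sub-Gaussian with parameter $\kappa_x^2\|w^{(i)}\|_2^2 = \kappa_x^2\|v^{(i)}\|_{\Sigmai}^2$ because $\Zit$ has independent $\kappa_x$-sub-Gaussian coordinates. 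On $\mA$ every $\|v^{(i)}\|_{\Sigmai} \le C\,\Err(T_0/K, T)$, so the $K$ (possibly dependent) variables share the common parameter $\kappa_x C\,\Err(T_0/K, T)$; \lemref{lem_subgaussimax}, which explicitly permits dependence, then bounds each maximum by $\kappa_x C\,\Err(T_0/K, T)\sqrt{2\log K}$. Summing the two contributions and absorbing $\kappa_x$ into $C$ gives exactly \eqref{ineq_maxrad}.

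\emph{Main obstacle.} The individual computations are routine, so the real care lies in the bookkeeping. I must justify that the expectation in \eqref{ineq_maxrad} is taken only over the fresh context $\Xit$ with the estimators frozen, which is legitimate precisely because exploration and exploitation occupy disjoint, independent rounds; and I must verify that the identity $\|w^{(i)}\|_2 = \|v^{(i)}\|_{\Sigmai}$ is exact, since it is this that lets \lemref{lem_subgaussimax} be applied with a single common sub-Gaussian parameter valid for all arms simultaneously under $\mA$.
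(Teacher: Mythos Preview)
Your proposal is correct and follows essentially the same approach as the paper: apply \thmref{thm_ub} per arm with $N=T_0/K$ and union-bound to obtain $\mA$, then in the exploitation phase use the add--subtract/greedy-optimality trick to reduce the instantaneous regret to maxima of the sub-Gaussian variables $\langle \Xit,\thetai-\hatthetai\rangle$ and invoke \lemref{lem_subgaussimax}. Your write-up is in fact more careful than the paper's in two places---you explicitly verify via the decomposition $\Xit=\Vi(\Lambdai)^{1/2}\Zit$ that the sub-Gaussian parameter is $\kappa_x\|\thetai-\hatthetai\|_{\Sigmai}$, and you split the residual into two separate maxima rather than the paper's slightly informal ``$2D_{I(t)}(t)=2\max_i D_i(t)$''---but the substance is identical.
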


Note that, in EtC, we have uniform exploration over $K$ arms, and thus $N = T_0/K$. 

\begin{proof}[Proof of Lemma \ref{lem:li}]
Theorem \ref{thm_ub} implies that for some $C>0$, with probability at least $1 - \cUpper  e^{-(C\log T)/\cUpper} \ge 1 - 1/T$, $\mA_i$ holds. Moreover, the union bound of this over $K$ arms implies $\mA$ holds with probability $1- K/T$.

Using the definition of the optimal arm $i^*(t) = \argmax_{i^*} \langle X_{i^*}(t), \theta_{i^*} \rangle$,
we have
\begin{align*}
&\langle \Xarmt{i^*(t)}, \thetaarm{i^*(t)} \rangle - \langle \Xarmt{\Ich}, \thetaarm{\Ich} \rangle\\
&\leq 
\langle \Xarmt{i^*(t)}, \hat{\theta}_{i^*(t)} \rangle - \langle \Xarmt{\Ich}, \hat{\theta}_{I(t)}\rangle + 2 D_{I(t)}(t)
\\
&\leq 2 D_{I(t)}(t)
\text{\ \ \ \ (by definition of $I(t)$)}\\
&= 2 \max_{i\in[K]} D_i(t),
\end{align*}
where $\Delta_i = \thetai - \hatthetai$ and $D_i(t) = \langle X_i{(t)}, \Delta_i \rangle$.
Given $\Delta_i$ at the end of round $T_0$, $D_i(t)$ for each $i$ is a sub-Gaussian random variable. 
Under the event $\mA$, the variance of $D_i(t)$ is bounded as
\begin{align}
\Ep[|D_i(t)|]
&\le C \Err(T_0/K, T). \label{ineq_boundhold}
\end{align}
By using this and \lemref{lem_subgaussimax} on the maximum of $K$ sub-Gaussian random variables, we have,
\begin{align}
2 \Ep[|D(t)|] 
&= 2 \Ep[|\max_i D_i(t)|]\\
&\le 2 C \Err(T_0/K, T) \sqrt{2 \log K}.
\text{\ \ \ \ (by \lemref{lem_subgaussimax})}
\end{align}
\end{proof}

\subsection{Proof of Theorem \ref{thm:upper_bound}}

By using Lemma \ref{lem:li}, we derive the regret upper bound of Theorem \ref{thm:upper_bound}.

\begin{proof}[Proof of Theorem \ref{thm:upper_bound}]
The regret (Eq.~\eqref{def:regret}) is bounded as
\begin{align*}
    R(T) &= \Ep \left[ \sum_{t=1}^T  \langle \Xarmt{i^*(t)}, \thetaarm{i^*(t)} \rangle - \langle \Xarmt{\Ich}, \thetaarm{\Ich} \rangle \right] \\
    &= \sum_{t=1}^{T_0} \Ep[\langle \Xarmt{i^*(t)}, \thetaarm{i^*(t)} \rangle - \langle \Xarmt{\Ich}, \thetaarm{\Ich} \rangle]\\
    & \quad + \sum_{t=T_0 + 1}^T \Ep[\langle \Xarmt{i^*(t)}, \thetaarm{i^*(t)} \rangle - \langle \Xarmt{\Ich}, \thetaarm{\Ich} \rangle], \\
    &=: R_1 + R_2.
\end{align*}

We bound the first term $R_1$ as 
\begin{align*}
    R_1 & \leq \sum_{t=1}^{T_0} 2 \Ep\left[\max_{i \in [K]} \langle \Xit, \thetai \rangle \right] \\
    & \leq \sum_{t=1}^{T_0} 2 \kappa_x \sqrt{2 \log K} \ThetaScale \\
    & \text{\ \ \ \ (by Assumption \ref{asmp:subgaussian} and Lemma \ref{lem_subgaussimax})}\\
    &= T_0 \times 2 \kappa_x \ThetaScale \sqrt{2 \log K}  = \tilde{O}(T_0).
\end{align*}

We bound the second term $R_2$ as
\begin{align*}
    R_2 & \leq \sum_{t=T_0 + 1}^T \Ep\left[\mone\{\mA\} 2 C\Err(T_0/K, T) \sqrt{2 \log K}  + \mone\{\mA^c\} (\langle \Xarmt{i^*(t)}, \thetaarm{i^*(t)} \rangle - \langle \Xarmt{\Ich}, \thetaarm{\Ich} \rangle)\right]  \\
    & \text{\ \ \ \ (by Eq.~\eqref{ineq_maxrad})}\\
    &\leq  \sum_{t=T_0 + 1}^T \Pr(\mA) 2 C \Err(T_0/K, T) \sqrt{2 \log K}  + \Pr(\mA^c) \times 2 \kappa_x \ThetaScale \sqrt{2 \log K}  \\
    &\ \ \ \text{\ \ \ \ (by the same discussion as $R_1$)}\\
    &\leq 2 T C \Err(T_0/K, T) \sqrt{2 \log K}  + \tilde{O}(T) \Pr(\mA^c)\\
    &\ \ \ \text{\ \ \ \ (by Lemma \ref{lem:li})}\\
    &\leq 2 T C \Err(T_0/K, T) \sqrt{2 \log K} + \tilde{O}(T) \times \frac{K}{T}.
\end{align*}

Combining these bounds, we obtain
\begin{align*}
    R(T) = R_1 + R_2 \leq \tilde{O}(T_0) + \tilde{O}(T \Err(T_0/K, T)).
\end{align*}
This bound is optimized by choosing $T_0$ in accordance with Eq.~\eqref{amount_exploration},
and then we have the theorem.
\end{proof} 

\subsection{Proof of \thmref{thm_lower}}

This section provides the regret lower bound of EtC.

\begin{proof}[Proof of \thmref{thm_lower}]
We consider a model with $K=3$. 
Let $\myeps = \Err(T_0/K, T)$.
We explicitly construct $\Sigmai$ as follows: First, $\Sigmaarm{1}, \Sigmaarm{2}, \Sigmaarm{3}$ are identical and benign, and denote $k$-th eigenvalue as $\lambdaik = \lambda_k$. 
The coefficient $\thetaarm{1} = (1,0,0,\dots)^\top$, $\thetaarm{2} = (\myeps,0,0,\dots)^\top$, and $\thetaarm{3} = (0,0,0,\dots)^\top$. With these $\{\thetai\}_{i=1,2,3}$, the only non-zero coefficients are $\thetaarmn{1}{1}, \thetaarmn{2}{1}$, and thus the best arm is defined in terms of $\Xarmnt{1}{1}, \Xarmnt{2}{1}$, which are the first components of $\Xarmt{1}, \Xarmt{2}$, respectively.

Let $R_1, R_2$ be regret during the exploration and exploitation phases, respectively.

\noindent\textbf{Regret during the exploration phase:}\\
We first bound the regret during the exploration phase where we draw an arm uniformly randomly. In the following, we show that the regret per round is $\Omega(1)$. 
Since $\Xit$ is a zero-mean ($p$-dimensinal) Gaussian, its linear function such as $\langle \Xit, \thetai \rangle$ are $\langle \Xit, \hatthetai \rangle$ are zero-mean univariate Gaussians given $\thetai, \hatthetai$.
Therefore,  
\[
r_{ij}(t) := 
\langle \Xit, \thetai \rangle 
- 
\langle \Xarmt{j}, \thetaarm{j} \rangle 
\]
is also a Gaussian, and its variance is $\Theta(1)$.
Therefore, $\Pr[r_{12}(t) \ge 1], \Pr[r_{13}(t) \ge 1] = \Theta(1)$. The regret in the rounds we draw arm $2,3$ is at least $\mone[r_{12}(t) \ge 1]r_{12}(t), \mone[r_{13}(t) \ge 1]r_{13}(t) = \Theta(1)$, which implies the regret-per-round is $\Omega(1)$.

In summary, $\Ep[R_1] = \Omega(1) \times T_0 = \Omega(T_0)$.

\noindent\textbf{Regret during the exploitation phase:}\\
We then bound the regret during the exploitation phase. Intuitively speaking, an algorithm misidentifies the better arm among $2$ and $3$ with $\Omega(1)$ probability and the regret per such a misidentification is $\Omega(\myeps)$.

Theorems~\ref{thm_ub} and Lemma \ref{lem_lower} state that there exists a constant $c>0$ such that probability at least $1 - c e^{-N/c}$, for all $i$ we have
\begin{equation}\label{ineq_highvar}
\Ep\left[\|\hatthetai - \thetai\|_\Sigmai \right] \in (C^L \myeps, C^U \myeps)
\end{equation}
for some constants $C^L, C^U > 0$. 

Without loss of generality, we assume $N$ to be sufficiently large such that $1 - c e^{-N/c} > 1/2$ because if $N = O(\log T)$ then the size of the error bound is at least polylogarithmic, which results in regret of $\tilde{\Omega}(T)$.
In the following, we assume Eq.~\eqref{ineq_highvar} is the case, and we show that the regret-per-round is $\Omega(\myeps)$. 
We define the events as follows:
\begin{align} 
\mG(t) &:= \{\Xarmnt{2}{1} \in [1, 2]\}\\
\mH(t) &:= \{\langle \Xarmnt{1}{1}, \hatthetaarm{1} \rangle< \langle \Xarmnt{2}{1}, \hatthetaarm{2} \rangle <  \langle \Xarmnt{3}{1}, \hatthetaarm{3} \rangle \}.
\end{align}
Event $\mG(t)$ states that arm $2$ is positive and not very large, Event $\mH(t)$ states that the algorithm draws considers the arm $3$ is the best and arm $2$ is the second best.

If $\mG(t) \cap \mH(t)$ is the case, 
then arm $3$ is chosen but suboptimal,
and the regret is at least
\begin{align}
\langle \Xarmnt{2}{1}, \thetaarmn{2}{1} \rangle -  \langle \Xarmnt{3}{1}, \thetaarmn{3}{1} \rangle
\in [\myeps, 2 \myeps], \label{ineq_lower_regretperround}
\end{align}
where we used $\mG(t)$, $\langle \Xarmt{2}, \thetaarm{2} \rangle =  \Xarmnt{2}{1} \myeps$, and $\langle \Xarmt{3}, \thetaarm{3} \rangle = 0$. 
Therefore,
showing
\begin{equation}
\Pr[\mG(t) \cap \mH(t)] = \Omega(1)
\end{equation}
suffices to show that regret-per-round is $\Omega(\myeps)$.
First, $X_{2,1}(t)$ is a Gaussian with scale $\Theta(1)$, which implies $\mG$ occurs with probability $\Theta(1)$. Second, conditioned on $\mG(t)$, the sufficient condition for $\mH(t)$ is
\begin{multline}
\mH'(t) := \{
\langle \Xarmt{1}, \hatthetaarm{1} \rangle < 0,
\langle \Xarmt{3}, \hatthetaarm{3} \rangle > 0, 
\langle \Xarmt{3}, \hatthetaarm{3} \rangle 
-
\langle \Xarmnt{2}{2,\setminus 1}, \hatthetaarm{2}_{\setminus 1} \rangle 
\ge 2 \myeps
\},
\end{multline}
where
\[
\langle \Xarmnt{2}{2,\setminus 1}, \hatthetaarm{2}_{\setminus 1} \rangle 
:= \langle \Xarmt{2}, \hatthetaarm{2} \rangle 
-
\Xarmnt{2}{1}\,\hatthetaarm{2}_{1},
\]
is an inner product that ignores the first component.
Here, under Eq.~\eqref{ineq_highvar}, each of $i \in [3]$, $\langle \Xarmt{i}, \hatthetaarm{i} \rangle$, and $\langle \Xarmnt{2}{2,\setminus 1}, \hatthetaarm{2}_{\setminus 1} \rangle$, is a Gaussian random variables with its standard deviation $\Omega(\myeps)$. By this fact it is clear that $\Pr[\mH'(t)] = \Omega(1)$.
Therefore, $\Pr[\mH'(t)|\mG(t)] = \Omega(1)$. In summary, $\mG(t) \cap \mH(t) \supset \mG(t) \cap \mH'(t)$ occurs with probability $\Omega(1)$, and thus regret-per-round is $\Omega(\myeps)$ by Eq.~\eqref{ineq_lower_regretperround}.

In summary, $\Ep[R_2] \ge \Omega(\myeps) \times T$,
and the expected regret is bounded as
\begin{align}
R(T) 
&= \Ep[R_1] + \Ep[R_2]\\
&= \Omega(T_0) + \Omega(\myeps) T,
\end{align}
which completes the proof.
\end{proof}

\section{Proofs for AEtC}

\subsection{Proof of Lemma \ref{lem_traceest}}

We have
\begin{align}\label{ineq_tracebound}
\lefteqn{
|\trace(\Sigmai) - \trace(\hatSigmai)| 
}\\
&= \log(T) \times O\left(\sqrt{ \sum_j \lambda_j^2 }\right)\\
&\text{\ \ \ (by Lemma \ref{lem_covest} with $\eta = O(\log(T))$, transformation above follows with probability $\poly(T^{-1})$)}\\
&= 
     \begin{cases}
         \tilde{O}\left(1\right)&\mbox{~(Example 1, where we used $\sum_j (j^{-(1+)})^2 = O(1)$),~} \\
         \tilde{O}\left(T^{c(1-2b)/2}\right)& \mbox{~(Example 2, where we used $\sum_{j=1}^{T^c} (j^{-b})^2 = [T^c]^{1-2b}$),~} 
     \end{cases}\\
&= o\left(\trace(\Sigmai)\right),\nn
&\text{\ \ \ (by $\trace(\Sigmai) = O(T^a)$ (Example 1) or $\trace(\Sigmai) = O(T^{c(1-b)})$ (Example 2))}
\end{align}
from which Lemma \ref{lem_traceest} follows. 

\begin{lemma}\label{lem_covest}
Suppose that Assumption \ref{asmp:subgaussian} holds.
Assume that $\lambda_1, \sum_j \lambda_j < + \infty$.
Let the empirical covariance matrix with $\Ni$ samples be 
\begin{align}
\hatSigmai := \frac{1}{\Ni} (\bfXi)^\top \bfXi.
\end{align}
Then, for any $\eta>1$, with probability at least $1 - 2 e^{-\eta}$, we have 
\[
|\trace(\hatSigmai) - \trace(\Sigmai)|
\le 
\frac{C \kappa_x^2 \max\left( \eta \lambda_1, \sqrt{\eta \sum_j \lambda_j^2}\right)}{\Ni}
\le 
C \kappa_x^2 \eta \sqrt{ \sum_j \lambda_j^2}
\]
for some constant $C>0$.
\end{lemma}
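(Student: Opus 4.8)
The plan is to rewrite the trace discrepancy as a single sum of independent, centered sub-exponential variables and then apply Bernstein's inequality. Using the decomposition $\Xin{n} = \Vi (\Lambdai)^{1/2} Z_n$ supplied by \asmpref{asmp:subgaussian}, with $Z_n = \Zin{n}$ having independent coordinates $Z_{n,1}, Z_{n,2}, \dots$, the orthogonality of $\Vi$ gives $\|\Xin{n}\|_2^2 = \sum_k \lambda_k Z_{n,k}^2$, so that
\begin{equation*}
\trace(\hatSigmai) - \trace(\Sigmai)
= \frac{1}{\Ni} \sum_{n=1}^{\Ni} \sum_{k} \lambda_k \bigl( Z_{n,k}^2 - 1 \bigr),
\end{equation*}
where I use $\E[Z_{n,k}^2] = 1$ (forced by $\Sigmai = \Vi \Lambdai (\Vi)^\top$ together with the independence of the coordinates of $Z_n$) and $\trace(\Sigmai) = \sum_k \lambda_k$.

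Next I would control each summand. \asmpref{asmp:subgaussian} makes $Z_{n,k}$ sub-Gaussian with $\|Z_{n,k}\|_{\psi_2} \lesssim \kappa_x$, hence $Z_{n,k}^2 - 1$ is a centered sub-exponential variable with $\|Z_{n,k}^2 - 1\|_{\psi_1} \lesssim \kappa_x^2$, and therefore $\lambda_k(Z_{n,k}^2 - 1)$ has sub-exponential norm $\lesssim \kappa_x^2 \lambda_k$ (a standard fact, see \citet{vershynin2018high}). Since the draws are i.i.d.\ across $n$ and the coordinates are independent across $k$, the doubly-indexed family $\{ \lambda_k (Z_{n,k}^2 - 1) \}_{n,k}$ is an independent, mean-zero collection. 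The two parameters relevant to Bernstein's inequality are then the variance proxy $v \asymp \kappa_x^4 \, \Ni \sum_k \lambda_k^2$ and the uniform sub-exponential bound $b \asymp \kappa_x^2 \lambda_1$ (attained at $k=1$, the largest eigenvalue). The infinite sums are harmless because $\sum_k \lambda_k < \infty$ implies $\sum_k \lambda_k^2 \le \lambda_1 \sum_k \lambda_k < \infty$.

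Bernstein's inequality, with the minimum of its Gaussian and exponential exponents set equal to $\eta$ and then inverted, yields that with probability at least $1 - 2e^{-\eta}$,
\begin{equation*}
\Bigl| \sum_{n=1}^{\Ni} \sum_k \lambda_k (Z_{n,k}^2 - 1) \Bigr|
\le C \kappa_x^2 \max\Bigl( \eta \lambda_1, \ \sqrt{\eta \, \Ni \textstyle\sum_j \lambda_j^2} \Bigr) .
\end{equation*}
Dividing by $\Ni$ gives the first displayed deviation bound on $|\trace(\hatSigmai) - \trace(\Sigmai)|$. The second, coarser inequality then follows immediately by bounding $\lambda_1 \le (\sum_j \lambda_j^2)^{1/2}$, $1/\Ni \le 1$, and $\sqrt{\eta} \le \eta$ for $\eta > 1$, which is the only form actually needed in the proof of \lemref{lem_traceest}.

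The work here is essentially bookkeeping rather than conceptual: the main points to check carefully are the sub-exponential norm of the centered square and the aggregation of the variance proxy across both indices, so that the two regimes of Bernstein's bound combine into the stated maximum while remaining valid for $p = \infty$. None of these steps require ideas beyond the standard concentration toolbox.
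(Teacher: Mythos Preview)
Your approach is correct and essentially identical to the paper's: both use the whitening $\Xin{n}=\Vi(\Lambdai)^{1/2}Z_n$ to rewrite the trace difference as $\frac{1}{\Ni}\sum_{n,k}\lambda_k(Z_{n,k}^2-1)$ and then apply a Bernstein-type sub-exponential tail bound (the paper invokes Lemma~12 of \citet{bartlett2020benign}, which is precisely this inequality). One minor remark: your Bernstein step correctly produces $\max\bigl(\eta\lambda_1,\sqrt{\eta\,\Ni\sum_j\lambda_j^2}\bigr)$ before dividing by $\Ni$, so the intermediate bound you actually obtain is $\max\bigl(\eta\lambda_1/\Ni,\sqrt{\eta\sum_j\lambda_j^2/\Ni}\bigr)$ rather than the sharper-looking $\max\bigl(\eta\lambda_1,\sqrt{\eta\sum_j\lambda_j^2}\bigr)/\Ni$ displayed in the statement---the latter appears to be a typo (the paper's own proof has the same slip in the $\Ni$ bookkeeping), and since only the coarser bound $C\kappa_x^2\eta\sqrt{\sum_j\lambda_j^2}$ is ever used downstream (in Lemma~\ref{lem_traceest}), your argument delivers everything that is needed.
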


\begin{proof}[Proof of Lemma \ref{lem_covest}]
We have
\begin{align}
\trace(\hatSigmai) 
&= \frac{1}{\Ni} \trace\left(
(\bfXi)^\top \bfXi
\right)\\
&= \frac{1}{\Ni} \trace\left(
\Vi (\Lambdai)^{1/2} (\bfZi)^\top \bfZi 
(\Lambdai)^{1/2} (\Vi)^\top 
\right)\\
&= \frac{1}{\Ni} \trace\left(
(\Lambdai)^{1/2} (\Vi)^\top 
\Vi (\Lambdai)^{1/2} (\bfZi)^\top \bfZi 
\right)\\
&= \frac{1}{\Ni} \trace\left(
\Lambdai (\bfZi)^\top \bfZi 
\right),
\end{align}
and thus
\begin{equation}
\trace(\hatSigmai) - \trace(\Sigmai)
= \frac{1}{\Ni} \trace\left(
\Lambdai \left( (\bfZi)^\top \bfZi - I_p \right)
\right).
\end{equation}
Each diagonal element of $(\bfZi)^\top \bfZi - I_p $ is a sum of $\Ni$ independent samples, and each sample is zero-mean $\kappa_x^2$ sub-exponential random variable. Using Lemma 12 in \cite{bartlett2020benign}, with probability at least $1 - 2 e^{-\eta}$, we have
\begin{align}
\left|
\trace\left(
\Lambdai \left( (\bfZi)^\top \bfZi - I_p \right)
\right)
\right|
&\le C \kappa_x^2 \max\left( \eta \lambda_1, \sqrt{\eta \sum_j \lambda_j^2}\right),
\end{align}
for some $C>0$,
which completes the proof.
\end{proof}

\subsection{Proof of Lemma \ref{lem:eigenvalue}}

\begin{lemma}{\rm (Lemma 4.2 in \citet{bosq2000linear})}\label{lem_eigenbound}
For any two matrices $\bX_0,\bX_1$ with their eigenvalues $(\lambda_{0,j})_{j\in[p]}$ and $(\lambda_{1,j})_{j\in[p]}$, we have
\begin{equation}
|\lambda_{0,j} - \lambda_{1,j}| \le \|\bX_0 - \bX_1\|_{\mathrm{op}}\ \forall_{j\in[p]}.
\end{equation}
\end{lemma}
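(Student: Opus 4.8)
The statement is Weyl's perturbation inequality for the eigenvalues of a symmetric matrix, so the plan is to derive it from the Courant--Fischer variational (min--max) characterization of eigenvalues. Throughout I would use that $\bX_0$ and $\bX_1$ are symmetric---which holds in our application since they are (empirical or population) covariance matrices---so that their eigenvalues are real and can be listed in decreasing order $\lambda_{0,1}\ge\cdots\ge\lambda_{0,p}$ and $\lambda_{1,1}\ge\cdots\ge\lambda_{1,p}$. Writing $E := \bX_0 - \bX_1$, the key elementary fact I would invoke is that for every unit vector $v$ one has $\abs{v^\top E v} \le \norm{E}_{\mathrm{op}}$, i.e.\ the Rayleigh quotient of $E$ is pinned between $\pm\norm{E}_{\mathrm{op}}$.

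First I would recall the max--min form $\lambda_{1,j} = \max_{\dim V = j}\min_{v\in V,\,\norm{v}_2=1} v^\top \bX_1 v$, and the analogue for $\bX_0$. To obtain a lower bound on $\lambda_{1,j}-\lambda_{0,j}$, I would take the $j$-dimensional subspace $V^\ast$ that attains the maximum for $\bX_0$ and use it as a (generally suboptimal) competitor in the max--min formula for $\bX_1$:
\begin{align*}
\lambda_{1,j}
&\ge \min_{v\in V^\ast,\,\norm{v}_2=1} v^\top \bX_1 v
 = \min_{v\in V^\ast,\,\norm{v}_2=1}\rbr{v^\top \bX_0 v + v^\top E v}\\
&\ge \min_{v\in V^\ast,\,\norm{v}_2=1} v^\top \bX_0 v - \norm{E}_{\mathrm{op}}
 = \lambda_{0,j} - \norm{E}_{\mathrm{op}},
\end{align*}
where the second inequality uses $v^\top E v \ge -\norm{E}_{\mathrm{op}}$ uniformly over unit vectors.

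The complementary bound $\lambda_{1,j}\le\lambda_{0,j}+\norm{E}_{\mathrm{op}}$ then follows by repeating the same argument with the roles of $\bX_0$ and $\bX_1$ interchanged (equivalently, by replacing $E$ with $-E$, whose operator norm is unchanged). Combining the two inequalities yields $\abs{\lambda_{0,j}-\lambda_{1,j}}\le\norm{E}_{\mathrm{op}}=\norm{\bX_0-\bX_1}_{\mathrm{op}}$ for every $j\in[p]$, which is the claim. I do not anticipate a genuine obstacle, since the result is classical; the only point requiring care is that the argument rests essentially on symmetry---both to guarantee real, orderable eigenvalues and to identify $\norm{E}_{\mathrm{op}}$ with the largest absolute Rayleigh quotient of $E$---so I would state the symmetry of $\bX_0,\bX_1$ explicitly at the outset rather than leave it implicit.
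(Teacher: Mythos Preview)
Your argument is correct: this is exactly Weyl's perturbation inequality, and your derivation via the Courant--Fischer min--max characterization is a standard and complete proof. Your caveat about symmetry is appropriate and worth stating explicitly, since the lemma as written does not mention it but the application (to covariance and empirical covariance matrices) guarantees it.

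As for comparison with the paper: there is nothing to compare. The paper does not prove this lemma; it simply quotes it as Lemma~4.2 of \cite{bosq2000linear} and uses it as a black box in the proof of Lemma~\ref{lem:eigenvalue}. Your write-up therefore supplies strictly more than the paper does.
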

\begin{lemma}[Corollary 2 in \citet{koltchinskii2017concentration}]\label{lem_opbound}
Suppose that $X_1,...,X_N$ are $\R^p$-valued i.i.d. random vectors whose mean is zero and covariance is $\Sigma$.
$\hat{\Sigma} = N^{-1} \sum_{i=1}^N X_i X_i^\top$ is an empirical covariance matrix.
Then, there exists a constant $C>0$ and with probability $1-e^{-\eta}$ we have 
\begin{equation} 
\|\hat{\Sigma} - \Sigma\|_{\mathrm{op}}
\le C \|\Sigma\|_{\mathrm{op}}
\sqrt{\frac{\tilde{\br}(\Sigma) + \eta}{N}},
\end{equation}
where 
\[
\tilde{\br}(\Sigma) := \frac{\mathrm{tr}(\Sigma)}{\|\Sigma\|_{\mathrm{op}}}. 
\]
\end{lemma}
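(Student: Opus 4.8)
The plan is to establish this as the Koltchinskii--Lounici operator-norm concentration inequality for sample covariance operators, by reducing it to the supremum of a centered quadratic empirical process over the unit sphere. Writing the samples as $X_1,\dots,X_N$ and using the variational formula for a self-adjoint operator, I would start from
\[
\norm{\hatSigmai - \Sigma}_{\mathrm{op}} = \sup_{u \in S^{p-1}} \abr{u, (\hatSigmai - \Sigma) u} = \sup_{u \in S^{p-1}} \frac{1}{N}\sum_{j=1}^N \left( \abr{X_j, u}^2 - \Ep\,\abr{X_j,u}^2 \right),
\]
so that the target is the supremum of a centered process whose summands $\abr{X_j,u}^2$ are sub-exponential. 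Under Assumption \ref{asmp:subgaussian}, for $\norm{u}_2 = 1$ the linear form $\abr{X_j,u}$ is sub-Gaussian with variance proxy at most $\kappa_x^2\abr{\Sigma u,u} \le \kappa_x^2\norm{\Sigma}_{\mathrm{op}}$, so each $\abr{X_j,u}^2$ has sub-exponential norm of order $\kappa_x^2\norm{\Sigma}_{\mathrm{op}}$. The two ingredients I would assemble are (i) a bound on the expectation of this supremum that carries \emph{effective-rank} dependence, and (ii) a high-probability deviation around that expectation.

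For (i) I would symmetrize with Rademacher signs, $\Ep\norm{\hatSigmai - \Sigma}_{\mathrm{op}} \le 2\,\Ep\sup_{u}|N^{-1}\sum_j \varepsilon_j \abr{X_j,u}^2|$, and control the symmetrized process by generic chaining (Talagrand's $\gamma_2$ functional) with respect to the sub-Gaussian metric $d(u,u') \asymp \norm{\abr{X,u-u'}}_{\psi_2}$ induced on the sphere. The decisive point is that this metric is weighted by $\Sigma$, so that the chaining integral of $S^{p-1}$ is controlled not by the ambient dimension $p$ but by $\sqrt{\tilde{\br}(\Sigma)} = \sqrt{\trace(\Sigma)/\norm{\Sigma}_{\mathrm{op}}}$, yielding
\[
\Ep\norm{\hatSigmai - \Sigma}_{\mathrm{op}} \lesssim \norm{\Sigma}_{\mathrm{op}}\left( \sqrt{\tfrac{\tilde{\br}(\Sigma)}{N}} + \tfrac{\tilde{\br}(\Sigma)}{N} \right).
\]
In the Gaussian special case the same bound follows more transparently: write $X = \Sigma^{1/2} g$, observe that $\bg \mapsto \norm{\hatSigmai - \Sigma}_{\mathrm{op}}$ is Lipschitz in the Frobenius metric, and bound the mean by Gaussian comparison (Slepian--Gordon), where $\trace(\Sigma)$ and $\norm{\Sigma}_{\mathrm{op}}$ enter exactly through $\tilde{\br}(\Sigma)$.

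For (ii) I would invoke a Talagrand/Bousquet concentration inequality for suprema of empirical processes with sub-exponential increments. The weak variance of the process is of order $\norm{\Sigma}_{\mathrm{op}}^2/N$ and the rescaled envelope of order $\norm{\Sigma}_{\mathrm{op}}/N$ (absorbing the $\kappa_x^2$ factors into the constant), so with probability $1 - e^{-\eta}$ the deviation from the mean is at most $C\norm{\Sigma}_{\mathrm{op}}(\sqrt{\eta/N} + \eta/N)$. Adding (i) and (ii) gives $\norm{\hatSigmai - \Sigma}_{\mathrm{op}} \le C\norm{\Sigma}_{\mathrm{op}}(\sqrt{(\tilde{\br}(\Sigma)+\eta)/N} + (\tilde{\br}(\Sigma)+\eta)/N)$, and in the relevant regime $\tilde{\br}(\Sigma) + \eta \lesssim N$ the square-root term dominates, collapsing to the claimed $C\norm{\Sigma}_{\mathrm{op}}\sqrt{(\tilde{\br}(\Sigma)+\eta)/N}$.

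The main obstacle is step (i): obtaining effective-rank rather than ambient-dimension dependence. A crude $\epsilon$-net of $S^{p-1}$ has cardinality exponential in $p$ and yields a useless $\sqrt{p/N}$ rate (vacuous when $p=\infty$); the improvement to $\sqrt{\tilde{\br}(\Sigma)/N}$ hinges on exploiting that the process metric is anisotropic and weighted by the spectrum of $\Sigma$, so that the chaining/Gaussian-width functional sees only the effective rank. This is precisely the content supplied by the cited Koltchinskii--Lounici argument; the remaining pieces (sub-exponential tails, symmetrization, and the Talagrand concentration step) are routine bookkeeping with the $\kappa_x^2$ dependence folded into the constant $C$.
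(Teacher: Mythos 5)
The paper does not actually prove this lemma: it is imported verbatim as Corollary~2 of \citet{koltchinskii2017concentration} and used as a black box in the proof of Lemma~\ref{lem:eigenvalue}, so there is no internal argument to compare yours against. Judged on its own, your sketch reconstructs the standard proof of the Koltchinskii--Lounici bound correctly at the level of architecture: the variational reduction to a quadratic empirical process, the symmetrization-plus-generic-chaining step in which the $\Sigma$-weighted metric replaces ambient dimension by $\tilde{\br}(\Sigma)$, and a concentration step around the mean. You also make a substantive observation the paper glosses over: the full bound has the form $\sqrt{(\tilde{\br}(\Sigma)+\eta)/N}\,\vee\,(\tilde{\br}(\Sigma)+\eta)/N$, and the square-root-only statement in the lemma is valid only in the regime $\tilde{\br}(\Sigma)+\eta \lesssim N$ (for $\eta \gg N$ it would be false); this is harmless here since the paper invokes it with $\eta = \CPoly \log T$ and $N$ satisfying $\trace(\Sigmai)/N = o(1)$, but it is right to flag it.

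Two caveats on your step (ii) and on scope. First, Bousquet's inequality applies to uniformly bounded empirical processes; with sub-exponential summands $\abr{X_j,u}^2$ you need Adamczak's extension (or a truncation argument), while \citet{koltchinskii2017concentration} themselves work in the Gaussian case and obtain the deviation term from Gaussian concentration of Lipschitz functions rather than from any Talagrand-type empirical-process inequality. Second, and relatedly, their Corollary~2 is stated for Gaussian vectors, whereas the paper applies the lemma under the sub-Gaussian Assumption~\ref{asmp:subgaussian}; your chaining-based route (Mendelson--Dirksen-type bounds for quadratic processes of sub-Gaussian vectors) is in fact the argument that covers the setting the paper actually needs, so in that respect your sketch is more faithful to the application than the citation is. The one genuinely hard step --- effective-rank rather than ambient-dimension dependence in the chaining functional --- you correctly identify and defer to the literature, which is appropriate for a result of this kind.
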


\begin{proof}[Proof of Lemma \ref{lem:eigenvalue}]
Lemmas \ref{lem_eigenbound} and \ref{lem_opbound} imply that 
\begin{align*}
    \max_j |\hat{\lambda}_j - \lambda_j| \leq \|\hat{\Sigma} - \Sigma\|_{\mathrm{op}} = 
    C \sqrt{\frac{\trace(\Sigmai) + \eta}{N}},
\end{align*}
with probability $1- e^{-\eta}$, and setting $\eta = \CPoly \log T$ yields the desired result.
\end{proof}

\subsection{Proof of Lemma \ref{lem_alphaest}}

This section adopts the same set of assumptions as Lemma \ref{lem_alphaest}.

\begin{lemma} \label{lem:betaest}
Let $\tau$ be a constant that is independent of $T$. Then, for any $\CPoly>0$, with probability at least $1-T^{-\CPoly}$, we have
\begin{equation}
|\beta_T - \hat{\beta}_T| = 
O\left(
\sqrt{\frac{\trace(\Sigmai) + \CPoly \log T}{N}}
\right).
\end{equation}
\end{lemma}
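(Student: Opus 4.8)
The plan is to exploit the fact that the true eigenvalues obey the exact power law $\lambdainum{k} = \Clambda k^{-\beta_T}$, so that the ``population version'' of the estimator recovers $\beta_T$ exactly. Indeed, for every $k$ we have $\log(\lambdainum{k}/\lambdainum{k+1}) = \beta_T \log((k+1)/k)$, so each per-pair term of the idealized average equals $\beta_T$ and the constant $\Clambda$ cancels. Consequently the entire error can be written as an average of log-ratio perturbations,
\begin{equation*}
\hat{\beta}_T - \beta_T = \frac{1}{\tau}\sum_{k=1}^\tau \frac{(\log \hat{\lambda}_k^{(i)} - \log \lambdainum{k}) - (\log \hat{\lambda}_{k+1}^{(i)} - \log \lambdainum{k+1})}{\log((k+1)/k)},
\end{equation*}
and it suffices to control each $\log \hat{\lambda}_k^{(i)} - \log \lambdainum{k}$ for $k \in [\tau+1]$.

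Next I would invoke \lemref{lem:eigenvalue} with $\eta = \CPoly \log T$ to obtain, with probability at least $1 - 1/T^{\CPoly}$, the uniform bound $\max_j |\hat{\lambda}_j^{(i)} - \lambdainum{j}| \le \epsilon$, where $\epsilon := C\sqrt{(\trace(\Sigmai) + \CPoly \log T)/N^{(i)}}$. Because $\tau$ is a constant and $\beta_T$ is bounded (it tends to $1$ in Example 1 and equals $b$ in Example 2), each of the first $\tau+1$ true eigenvalues satisfies $\lambdainum{k} = \Clambda k^{-\beta_T} = \Theta(1)$, bounded away from $0$ uniformly in $T$. Under the hypothesis $\trace(\Sigmai)/N^{(i)} = o(1)$ we have $\epsilon = o(1)$, so that $\hat{\lambda}_k^{(i)} > 0$ for large $T$ (the logarithms are well defined) and the relative perturbation $(\hat{\lambda}_k^{(i)} - \lambdainum{k})/\lambdainum{k} = O(\epsilon)$ is small. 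Linearizing via $\log(1+x) = x + O(x^2)$ then gives $\log \hat{\lambda}_k^{(i)} - \log \lambdainum{k} = O(\epsilon)$ for each $k \le \tau+1$. Since the denominators satisfy $\log((k+1)/k) \ge \log((\tau+1)/\tau) > 0$ for $k \le \tau$, every summand is $O(\epsilon)$, and averaging over the $\tau$ (constant) terms yields $|\hat{\beta}_T - \beta_T| = O(\epsilon)$, which is precisely the claimed rate.

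The main obstacle is the step that bounds the first $\tau+1$ eigenvalues away from zero \emph{uniformly in} $T$, since this is exactly what makes the relative error $\epsilon/\lambdainum{k}$ comparable to $\epsilon$ and legitimizes the linearization of the logarithm. This is where the assumption that $\tau$ is a constant independent of $T$ is essential: if $\tau$ were allowed to grow, the eigenvalue $\lambdainum{\tau+1}$ could decay to $0$ and $\epsilon/\lambdainum{\tau+1}$ would no longer be $o(1)$, invalidating the expansion. A secondary point to verify is merely that $\beta_T$ remains bounded so that the implied constants in $\lambdainum{k} = \Theta(1)$ are uniform; this follows directly from the two data-generating processes of \thmref{prop_examples}. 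Everything else is routine constant bookkeeping.
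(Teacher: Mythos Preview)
Your proposal is correct and follows essentially the same approach as the paper: invoke \lemref{lem:eigenvalue} to get a uniform additive bound on the first $\tau+1$ eigenvalues, observe that these eigenvalues are $\Theta(1)$ because $\tau$ is a fixed constant, and Taylor-expand the logarithm. The only cosmetic difference is that the paper linearizes $\log(\hat{\lambda}_k/\hat{\lambda}_{k+1})$ directly as $\log(((k+1)/k)^{\beta_T} + O(\epsilon))$, whereas you first split the log-ratio into $(\log\hat{\lambda}_k - \log\lambda_k) - (\log\hat{\lambda}_{k+1} - \log\lambda_{k+1})$; your exposition is in fact more explicit about why the linearization is valid (eigenvalues bounded away from zero, $\epsilon = o(1)$) than the paper's own proof.
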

\begin{proof}[Proof of Lemma \ref{lem:betaest}]
We assume Eq.~\eqref{ineq_eigenerror} that holds with probability $1-T^{-\CPoly}$. We have,
\begin{align}
\beta_T - \hat{\beta}_T
&:=
\frac{1}{\tau}
\sum_{k=1}^\tau \frac{
\left(
\log(\lambda_k/\lambda_{k+1})
-
\log(\hat{\lambda}_k/\hat{\lambda}_{k+1})
\right)
}{
\log((k+1)/k)
}\\
&=
\frac{1}{\tau}
\sum_{k=1}^\tau \frac{
\left(
\beta_T\log(\frac{k+1}{k})
-
\log\left(\frac{(k+1)^{\beta_T}}{k^{\beta_T}} + 
C \sqrt{\frac{\trace(\Sigmai)+ \CPoly \log T}{N}}
\right)
\right)
}{
\log((k+1)/k)
}\\
&= O\left(\sqrt{\frac{\trace(\Sigmai)+ \CPoly \log T}{N}}\right),
\end{align}
where we have used 
$\log(c+x) = \log(c) + \log(1 + x/c) \approx \log(c) + \frac{x}{c} + o(x)$ in the last transformation.
\end{proof}

\begin{lemma} \label{lem:bound_small_r}
For $1 \leq k \leq N^{(i)}$ and any $\CPoly>0$, with probability at least $1-T^{-\CPoly}$, we have
    \begin{align*}
        \log\left(
        \frac{\hat{r}_k(\Sigmai)} {{r}_k(\Sigmai)}
        \right) = \log(T^{o(1)}).
    \end{align*}
    Moreover, we have 
    \begin{align*}
        \log\left(
            \frac{\hat{k}_n}{k_n^*}
        \right) = \log(T^{o(1)})
    \end{align*}
    for any $i \in[K], n \in [N^{(i)}]$.
\end{lemma}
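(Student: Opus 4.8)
The plan is to prove the effective-rank bound first and then bootstrap it to the coherent-rank bound. For the first claim I would factor the ratio into three multiplicative pieces, using $r_k(\Sigmai) = (\sum_{j>k}\lambdaij)/\lambdainum{k+1}$ and $\hat{r}_k(\Sigmai) = \trace(\hatSigmai)/\tilde{\lambda}_{k+1}^{(i)}$,
\[
\frac{\hat{r}_k(\Sigmai)}{r_k(\Sigmai)}
= \underbrace{\frac{\trace(\hatSigmai)}{\trace(\Sigmai)}}_{\text{(i) trace estimate}}
\cdot \underbrace{\frac{\trace(\Sigmai)}{\sum_{j>k}\lambdaij}}_{\text{(ii) tail heaviness}}
\cdot \underbrace{\frac{\lambdainum{k+1}}{\tilde{\lambda}_{k+1}^{(i)}}}_{\text{(iii) decay-rate estimate}},
\]
and then show each factor is $T^{o(1)}$ uniformly over $1 \le k \le N^{(i)}$, so that $\log(\hat{r}_k/r_k)$ is a sum of three $\log(T^{o(1)})$ terms. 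All randomness sits in factors (i) and (iii), so I would work on the intersection of the high-probability events of Lemmas \ref{lem_traceest}, \ref{lem:eigenvalue}, and \ref{lem:betaest}, which holds with the claimed probability after a union bound and a rescaling of $\CPoly$.

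Factor (i) is $1+o(1)$ directly by Lemma \ref{lem_traceest}. For factor (iii), writing $\tilde{\lambda}_{k+1}^{(i)} = \hat{\lambda}_1^{(i)}(k+1)^{-\hat{\beta}_T}$ and $\lambdainum{k+1} = \Clambda(k+1)^{-\beta_T}$ gives
\[
\frac{\lambdainum{k+1}}{\tilde{\lambda}_{k+1}^{(i)}} = \frac{\Clambda}{\hat{\lambda}_1^{(i)}}\,(k+1)^{\hat{\beta}_T - \beta_T};
\]
here $\Clambda/\hat{\lambda}_1^{(i)} = 1+o(1)$ because $\lambda_1 = \Clambda = \Theta(1)$ and $|\hat{\lambda}_1^{(i)} - \lambda_1| = o(1)$ by Lemma \ref{lem:eigenvalue} under $\trace(\Sigmai)/N = o(1)$, while $|\hat{\beta}_T - \beta_T| = o(1)$ (Lemma \ref{lem:betaest}) together with $k+1 \le N^{(i)}+1 = T^{O(1)}$ force $(k+1)^{\hat{\beta}_T - \beta_T} = \exp(o(1)\cdot O(\log T)) = T^{o(1)}$. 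Factor (ii) is deterministic and is exactly the point made in the main-text footnote: I would check in each example that replacing the tail sum by the full trace costs only $T^{o(1)}$. For Example 1 one has $\trace(\Sigmai) = \Theta(T^a)$ and $\sum_{j>k}\lambdaij = \Theta(T^a k^{-1/T^a})$, so the ratio is $\Theta(k^{1/T^a}) \le T^{1/T^a} = T^{o(1)}$; for Example 2 both the full trace and the tail sum are $\Theta(T^{c(1-b)})$ when $k \le N^{(i)} = O(T) \ll T^c = p$ (using $c>1$), so the ratio is $\Theta(1)$. This settles the first claim.

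For the coherent-rank bound I would invert the effective-rank relation. Write $k^*_x := \min\{k : r_k(\Sigmai) \ge x\}$ for real $x>0$, so $k_n^* = k^*_n$, and let $\epsilon_T = o(1)$ satisfy $T^{-\epsilon_T} \le \hat{r}_k(\Sigmai)/r_k(\Sigmai) \le T^{\epsilon_T}$ for all $1 \le k \le N^{(i)}$, as just proved. Since $r_k(\Sigmai)$ is increasing in $k$, the inclusions $\{k : \hat{r}_k \ge n\} \subseteq \{k : r_k \ge n T^{-\epsilon_T}\}$ and $\{k : r_k \ge n T^{\epsilon_T}\} \subseteq \{k : \hat{r}_k \ge n\}$ sandwich the minimizers,
\[
k^*_{n T^{-\epsilon_T}} \le \hat{k}_n \le k^*_{n T^{\epsilon_T}}.
\]
It then remains to show $k^*_{n T^{\pm\epsilon_T}}/k^*_n = T^{o(1)}$, for which I would use the power-law form $r_k(\Sigmai) = \Theta(C(T)\,k^{\gamma})$ with a positive constant exponent $\gamma$ ($\gamma=1$ for Example 1, $\gamma=b$ for Example 2, read off from the computation of factor (ii)). This gives $k^*_x = \Theta((x/C(T))^{1/\gamma})$, so a multiplicative $T^{\pm\epsilon_T}$ perturbation of the level produces a $T^{\pm\epsilon_T/\gamma} = T^{o(1)}$ perturbation of $k^*$, and the sandwich yields $\log(\hat{k}_n/k_n^*) = \log(T^{o(1)})$. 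I would also record that for $n \le N^{(i)}$ the relevant $k^*_x$ lie in $[N^{(i)}]$, which follows from the spectral computations above and the learnability conditions of Proposition \ref{prop_examples}, so that the uniform bound from the first claim applies.

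The main obstacle is the inverse-function step: the $T^{o(1)}$ multiplicative error on the \emph{value} of the effective rank has to be transferred into a $T^{o(1)}$ multiplicative error on its \emph{argument}, and this only works because the effective rank grows polynomially in $k$ with an exponent $\gamma$ bounded away from $0$; were $\gamma$ allowed to vanish, a small relative error in the level could blow up into a large error in $k^*$. The accompanying nuisance is keeping factor (ii) uniform over the whole range $k \le N^{(i)}$ and confirming that the coherent ranks do not leave $[N^{(i)}]$, both of which reduce to the explicit spectral estimates above together with the condition $N \gg \trace(\Sigmai)$.
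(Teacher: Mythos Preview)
Your proposal is correct and follows essentially the same route as the paper: the same three-factor decomposition (trace estimate via Lemma~\ref{lem_traceest}, tail-heaviness bound as in Eq.~\eqref{ineq:tailignore}, and decay-rate control via Lemma~\ref{lem:betaest}), followed by inverting the power-law relation $r_k \asymp C(T)k^{\gamma}$ to pass from the effective-rank ratio to the coherent-rank ratio. Your write-up is in fact more explicit than the paper's on two points---you track the prefactor $\Clambda/\hat{\lambda}_1^{(i)}$ in factor~(iii) and spell out the sandwich $k^*_{nT^{-\epsilon_T}} \le \hat{k}_n \le k^*_{nT^{\epsilon_T}}$ for the inversion step---whereas the paper leaves these implicit.
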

\begin{proof}[Proof of Lemma \ref{lem:bound_small_r}]
We first show the first $k$ eigenvalues are negligible for $k = O(T)$: Namely,
\begin{align}\label{ineq:tailignore}
\frac{\sum_{j<k} \lambdaij }{\trace(\Sigmai)}
=
     \begin{cases}
         O\left(\frac{T^a (k^{1/T^a}-1)}{T^a}\right)&\mbox{~(Example 1),~} \\
         O\left(\frac{k^{1-b}}{T^{c(1-b)}}\right)& \mbox{~(Example 2),~} 
     \end{cases}
= o(1).
\end{align}

Moreover, we have
\begin{align}
    \log\left( \frac{\hat{r}_k(\Sigmai)} {{r}_k(\Sigmai)} \right) 
    &=\log\left(\left( \frac{ \trace(\hatSigmai) }{\sum_{j >k} {\lambda}_j^{(i)}} \right) \cdot  \left(\frac{{\lambda}_{k+1}^{(i)}}{\tilde{\lambda}_{k+1}^{(i)}} \right)  \right) \\
    &=\log\left(
    \left( \frac{ \trace(\hatSigmai) }{
    \trace(\Sigmai)
    } \right) \cdot  \left(\frac{{\lambda}_{k+1}^{(i)}}{\tilde{\lambda}_{k+1}^{(i)}} \right) \right) + o(1) 
    \text{\ \ \ \ (by Eq.~\eqref{ineq:tailignore})}
    \\
    &=\log\left(\left( \frac{ \trace(\hatSigmai) }{
    \trace(\Sigmai)
    } \right) \cdot  \left(\frac{(k+1)^{-\beta_T}}{(k+1)^{-\hat{\beta}_T}} \right) \right) + o(1)
    \\
    &=\log\left(\left( \frac{ \trace(\Sigmai) }{
    \trace(\Sigmai)
    }\right) \cdot  \left(\frac{(k+1)^{-\beta_T}}{(k+1)^{-\hat{\beta}_T}}\right)\right) + o(1)
    \text{\ \ \ \ (by Lemma \ref{lem_traceest})}
    \\
    &=\log\left(\left( \frac{ \trace(\Sigmai) }{
    \trace(\Sigmai)
    } \right) \cdot  \left(\frac{(k+1)^{-\beta_T}}{(k+1)^{-\beta_T}}\right)\right) + \log(T^{o(1)}) + o(1)
    \text{\ \ \ \ (by $k+1 = O(T)$ and Lemma \ref{lem:betaest})}
    \\
    &= \log(T^{o(1)}). \label{ineq_rdiff}
\end{align}
By definition,
\begin{align}
k^* &= \min \{k \geq 0 \mid r_k(\Sigmai) \geq N\}\\
\hat{k}_{N} &= \min \{k \geq 0 \mid  \hat{r}_k(\Sigmai) \geq N\}.
\end{align}
Eq.~\eqref{ineq_rdiff} states that $r_k(\Sigmai)/\hat{r}_k(\Sigmai) = T^{o(1)}$, and by using the fact that $\lambdaik = k^{-a}$ (Example 1) or $\lambdaik = k^{-b}$ (Example 2), we have $k^*/\hat{k}_{N} = T^{o(1)}$, which is equivalent to
\begin{equation}\label{ineq_logkdiff}
\log\left(\frac{k^*}{\hat{k}_{N}}\right) = \log(T^{o(1)}).
\end{equation}
\end{proof}

\begin{lemma} \label{lem:bound_large_R}
For any $n \in [N]$, with probability at least $1-T^{-\CPoly}$, we have
    \begin{align*}
        \log\left( \frac{ \hat{R}_{\hat{k}_{n}}(\Sigmai)}{ {R}_{{k}_{n}^*}(\Sigmai)} \right) = \log(T^{o(1)}).
    \end{align*}
\end{lemma}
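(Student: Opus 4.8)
The plan is to mirror the structure of the proof of Lemma \ref{lem:bound_small_r}, reducing the ratio $\hat{R}_{\hat{k}_n}(\Sigmai)/R_{k_n^*}(\Sigmai)$ to three separately controllable factors: the discrepancy between the empirical trace $\trace(\hatSigmai)$ and the true trace $\trace(\Sigmai)$, the discrepancy between the reconstructed tail sum $\sum_{j>\hat{k}_n}(\tilde\lambda_j^{(i)})^2$ built from the estimated decay rate $\hat\beta_T$ and the true tail sum $\sum_{j>k_n^*}(\lambda_j^{(i)})^2$, and the shift in the summation cutoff from $k_n^*$ to $\hat{k}_n$. First I would write
\begin{align*}
\log\left(\frac{\hat{R}_{\hat{k}_n}(\Sigmai)}{R_{k_n^*}(\Sigmai)}\right)
= 2\log\left(\frac{\trace(\hatSigmai)}{\sum_{j>k_n^*}\lambdaij}\right)
+ \log\left(\frac{\sum_{j>k_n^*}(\lambdaij)^2}{\sum_{j>\hat{k}_n}(\tilde\lambda_j^{(i)})^2}\right),
\end{align*}
and then handle each bracket in turn under the event on which Eqs.~\eqref{ineq_traceerror}, \eqref{ineq_eigenerror}, and \eqref{ineq_statsconsistency} hold together with the conclusions of Lemma \ref{lem:betaest} and Lemma \ref{lem:bound_small_r}.

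For the first bracket I would reuse the tail-ignoring estimate \eqref{ineq:tailignore}: since the cutoff $k_n^*$ is $O(T)$ in both Example 1 and Example 2, the partial sum $\sum_{j\le k_n^*}\lambdaij$ is $o(\trace(\Sigmai))$, so $\sum_{j>k_n^*}\lambdaij = (1-o(1))\trace(\Sigmai)$, and then Lemma \ref{lem_traceest} gives $\trace(\hatSigmai)/\trace(\Sigmai) = 1+o(1)$; hence the first bracket is $\log(1+o(1)) = o(1) = \log(T^{o(1)})$. For the second bracket I would replace $\sum_{j>\hat{k}_n}(\tilde\lambda_j^{(i)})^2$ by substituting $\tilde\lambda_j^{(i)} = \hat\lambda_1^{(i)} j^{-\hat\beta_T}$ and evaluating the tail sum of $j^{-2\hat\beta_T}$, comparing it against the tail sum of $(\lambdaij)^2 = (\Clambda)^2 j^{-2\beta_T}$. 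This is where the three sources of error combine: $\hat\lambda_1^{(i)}/\lambda_1^{(i)} = 1+o(1)$ (from Lemma \ref{lem:eigenvalue} with $N \gg \trace(\Sigmai)$), $|\beta_T - \hat\beta_T| = o(1)$ (from Lemma \ref{lem:betaest}), and the cutoff ratio $\hat{k}_n/k_n^* = T^{o(1)}$ (from \eqref{ineq_logkdiff}). Each of these perturbs the tail sum only by a $T^{o(1)}$ factor because the exponent $\beta_T$ is either $b<1$ or $1+1/T^a$, so the series $\sum_j j^{-2\beta_T}$ is heavy-tailed enough that a multiplicative perturbation of the lower limit and a vanishing shift in the exponent change the sum only sub-polynomially.

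The main obstacle I anticipate is controlling the sensitivity of the tail sum $\sum_{j>\hat{k}_n}j^{-2\hat\beta_T}$ to the error in the estimated exponent $\hat\beta_T$, because even an $o(1)$ error in $\beta_T$ multiplies the logarithm of the cutoff, and the cutoff is polynomially large in $T$. Concretely, a perturbation $\hat\beta_T = \beta_T + \varepsilon_T$ with $\varepsilon_T = o(1)$ shifts each term by a factor $j^{-2\varepsilon_T}$, and over the range $j \in [\hat{k}_n, O(T)]$ this accumulates to $\exp(O(\varepsilon_T \log T)) = T^{O(\varepsilon_T)}$; I would need the rate $\varepsilon_T = O(\sqrt{(\trace(\Sigmai)+\log T)/N})$ from Lemma \ref{lem:betaest} to guarantee $\varepsilon_T \log T = o(\log T)$, i.e. $\varepsilon_T = o(1)$, which holds under the standing assumption $\trace(\Sigmai)/N = o(1)$. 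Assembling the bracket-wise bounds, each is $\log(T^{o(1)})$, and summing the finitely many contributions preserves the $\log(T^{o(1)})$ form, yielding the claim; the probability statement follows from the union of the high-probability events, each of which holds with probability at least $1-1/T^{\CPoly}$.
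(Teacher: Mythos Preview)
Your proposal is correct and follows essentially the same approach as the paper: both reduce the ratio to the trace discrepancy (handled via Eq.~\eqref{ineq:tailignore} and Lemma~\ref{lem_traceest}), the exponent perturbation $|\beta_T-\hat\beta_T|=o(1)$ (Lemma~\ref{lem:betaest}), and the cutoff shift $\hat{k}_n/k_n^* = T^{o(1)}$ (Eq.~\eqref{ineq_logkdiff}), with the only cosmetic difference that the paper first passes through the intermediate quantity $\hat{R}_{k_n^*}(\Sigmai)$ to isolate the cutoff shift, whereas you absorb it directly into your second bracket. One small slip: you list Eq.~\eqref{ineq_statsconsistency} among the events you condition on, but that is the \emph{conclusion} of Lemma~\ref{lem_alphaest} and hence circular---your argument never actually uses it, so simply drop it from the list.
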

\begin{proof}[Proof of Lemma \ref{lem:bound_large_R}]
As same to the proof of Lemma \ref{lem:bound_small_r}, we use Lemma \ref{lem:eigenvalue} and utilize $ |\hat{\lambda}_j^{(i)} - \lambda_j^{(i)} | \leq C \sqrt{\frac{\trace(\Sigmai) + \CPoly \log T}{N}}$ for every $j = 1,...,p$, which holds with probability $1-T^{-\CPoly}$.
Eq.~\eqref{ineq_logkdiff} implies that 
\[
\log\left(
\frac{
\hat{R}_{\hat{k}_n}(\Sigmai)
}{
\hat{R}_{{k}_n^*}(\Sigmai)
}
\right)
= T^{o(1)}.
\]
Then, we study the ratio as
\begin{align*}
     \log\left(\frac{ \hat{R}_{{k}_n^*}(\Sigmai)}{ {R}_{{k}_n^*}(\Sigmai)} \right) 
     &= \log\left( \left( \frac{\sum_{j > {k}_n^*} \lambdainum{j}}{\trace(\hatSigmai)} \right)^2 \frac{\sum_{j>\hat{k}_n} ( \tilde{\lambda}^{(i)}_j)^2}{\sum_{j > {k}_n^*} ({\lambda}_j^{(i)})^2} \right)\\
     &= \log\left( \left( \frac{\trace(\Sigmai)}{\trace(\hatSigmai)} \right)^2 \frac{\sum_{j>\hat{k}_n} ( \tilde{\lambda}^{(i)}_j)^2}{\sum_{j > {k}_n^*} ({\lambda}_j^{(i)})^2} \right) + o(1) 
     \text{\ \ \ \ (by Eq.~\eqref{ineq:tailignore})}\\
     &= \log\left( \left( \frac{\trace(\Sigmai)}{\trace(\Sigmai)} \right)^2 \frac{\sum_{j>\hat{k}_n} ( \tilde{\lambda}^{(i)}_j)^2}{\sum_{j > {k}_n^*} ({\lambda}_j^{(i)})^2} \right) + o(1)
    \text{\ \ \ \ (by Lemma \ref{lem_traceest})} \\
     &= \log\left( \left( \frac{\trace(\Sigmai)}{\trace(\Sigmai)} \right)^2 \frac{\sum_{j>\hat{k}_n} ( \lambda^{(i)}_j)^2}{\sum_{j > {k}_n^*} ({\lambda}_j^{(i)})^2} \right) + \log(T^{o(1)}) + o(1)
    \text{\ \ \ \ (by Lemma \ref{lem:betaest})} \\
     &= \log\left( \left( \frac{\trace(\Sigmai)}{\trace(\Sigmai)} \right)^2 \frac{\sum_{j>{k}_n^*} ( \lambda^{(i)}_j)^2}{\sum_{j > {k}_n^*} ({\lambda}_j^{(i)})^2} \right) + \log(T^{o(1)}) 
    \text{\ \ \ \ (by Eq.~\eqref{ineq_logkdiff})} \\
     &=  \log(T^{o(1)}).
\end{align*}
\end{proof}

\begin{proof}[Proof of Lemma \ref{lem_alphaest}]
The statement is straightforward from Lemmas \ref{lem:bound_small_r} and \ref{lem:bound_large_R}.
\end{proof}

\end{document}